\begin{document}
%
\title{Best Arm Identification under Additive Transfer Bandits}

\author{\IEEEauthorblockN{Ojash Neopane}
\IEEEauthorblockA{Machine Learning Department\\
Carnegie Mellon University\\
Pittsburgh, PA\\
Email: oneopane@andrew.cmu.edu}
\and
\IEEEauthorblockN{Aaditya Ramdas}
\IEEEauthorblockA{Department of Statistics and Data Science\\Machine Learning Deparment\\
Carnegie Mellon University\\
Pittsburgh, PA\\
Email: aramdas@cmu.edu}
\and
\IEEEauthorblockN{Aarti Singh}
\IEEEauthorblockA{Machine Learning Department\\
Carnegie Mellon University\\
Pittsburgh, PA\\
Email: aartisingh@cmu.edu}}


%


\maketitle

\begin{abstract}
We consider a variant of the best arm identification  (BAI) problem in multi-armed bandits (MAB) in which there are two sets of arms (source and target), and the objective is to determine the best target arm while only pulling source arms.
In this paper, we study the setting when, despite the means being unknown, there is a known additive relationship between the source and target MAB instances.
We show how our framework covers a range of previously studied pure exploration problems and additionally captures new problems.
We propose and theoretically analyze an LUCB-style algorithm to identify an $\epsilon$-optimal target arm with high probability.
Our theoretical analysis highlights aspects of this transfer learning problem that do not arise in the typical BAI setup, and yet recover the LUCB algorithm for single domain BAI as a special case.
\end{abstract}


%
\IEEEpeerreviewmaketitle

\section{Introduction}
In this work, we study a problem at the intersection of transfer learning and sequential decision making.
At a high-level, the problem we study involves two multi-armed bandit (MAB) instances, which we call the \emph{source} and \emph{target} instances, as well a \emph{transfer function}, which is a known relationship between the two MAB instances. 
Within this setup we define and consider an appropriately modified variant of the $(\epsilon, \delta)$-correct best arm identification (BAI) objective \cite{bubeck2009pure, audibert2010best}. 

\paragraph{Some Motivating Examples}
We start off by highlighting various scenarios where the need to transfer knowledge between sequential decision making problems arise:
\begin{itemize}
    \item \textbf{Clinical Trials.} 
The first scenario we consider is the application of MABs to clinical trials \cite{kazerouni2019best}.
In this context, the arms can be thought of as the different treatments and we wish to determine which is most effective.
A standard practice in this setup is to test treatments on animals before transitioning to clinical trials for humans.
Ideally, we wish to identify the optimal treatments for humans by only testing the treatments on animals.
Here, we can view the animal trials as the source domain, and human trials as the target domain.

\item \textbf{Sim-to-Real Transfer in Reinforcement Learning.}
A popular paradigm for `cheap' reinforcement learning is sim-to-real transfer in reinforcement learning \cite{bousmalis2018using, rusu2016sim, sadeghi2016cad2rl}.
In the sim-to-real problem, the objective is to learn a robot's control policy for the real world (target domain) while restricting training to computer simulations (source domain).
Currently, in the sim-to-real literature, most algorithms rely on heuristics to learn these control policies -- typically by ensuring that a sufficiently diverse set of environments are encountered during training.
While some of these heuristics have proven to be successful, our theoretical understanding of this problem remains in its infancy.
We believe that studying our proposed problem is a first step towards gaining a better understanding of how to transfer knowledge in more complicated sequential decision making problems.

\item \textbf{Rate adaptation in wireless networks.} Rate allocation in wireless networks has been posed as a bandit optimization problem under fixed channel conditions \cite{combes18rate,qi19rate}. However, it is important to adapt the rate allocation according to varying channel conditions by transferring rate allocation policies between related channel conditions. 


\end{itemize}

\paragraph{Paper Outline}
The rest of this paper is organized as follows.
In Section~\ref{sec:problem_setup} we formally define the additive-transfer BAI problem as well as natural notions of correctness.
We cover related work in Section~\ref{sec:related-work}.
Next, in Section~\ref{sec:algorithm} we describe the T-LUCB algorithm for the additive-transfer BAI problem. 
Then, in Section~\ref{sec:results} we provide results on our theoretical analysis of the T-LUCB algorithm.
Finally, in Section \ref{sec:conclusion} we discuss additional relevant work and touch on possible interesting future directions. 
Proofs of all results can be found in the Appendix.

\section{Problem Setup}\label{sec:problem_setup}
Before introducing the transfer BAI problem, we briefly review the $\epsilon$-BAI problem within the classical MAB framework. 
    In our notation, we define an $n$-armed MAB instance to be a set of $n$ tuples $\braket{\parenthesis{P_i, \mu_i}}_{i = 1}^{n}$ where $P_i \in \varProbSet$ is a probability distribution in some known set $\varProbSet$ and $\mu_i \coloneqq \bbE_{P_i}[X]$ is the mean of $P_i$. 
    For example, $\varProbSet$ could be the set of all sub-Gaussian distributions.
    In this setup, an algorithm interacts with the MAB instance through a round-based protocol.
    In each rounds, $\round$, the learner selects an arm $I_t \in \{1, \ldots, n\}$, and observes a sample $X_t \sim P_{I_t}$.
    For the $\epsilon$-BAI problem, the objective is to identify an $\epsilon$-optimal arm $\varSelectionRule$ satisfying $\sourceMean_{\varSelectionRule} + \epsilon \geq \max_{i \in [n]} \sourceMean_i $, where $[n] = \{1, \ldots, n\}$. 
    
    This problem is often studied in the so-called \emph{fixed-confidence} setting in which a confidence parameter $\delta$ is given and an algorithm is said to be correct if, with probability greater than $1 - \delta$, it stops and returns an $\epsilon$-optimal arm.
    For any fixed MAB instance, an algorithm's performance is then judged by either a \emph{high-probability} or an \emph{in expectation} upper-bound on the number of samples required to identify an $\epsilon$-optimal arm.
    \emph{In this work, we will give a high probability bound for a variant of the fixed-confidence setting that naturally arises in our setup.}
     
    \tb{Transfer Best Arm Identification.}
    We are now ready to introduce the transfer BAI problem which can be stated as a tuple $(\{\sourceArm_\sourceIndex, \sourceMean_\sourceIndex\}_{\sourceIndex = 1}^{\numSourceArms}, \{\targetArm_\targetIndex, \targetMean_\targetIndex\}_{\targetIndex = 1}^{\numTargetArms}, \varTransferFunction)$.
    Here, $\{\sourceArm_\sourceIndex, \sourceMean_\sourceIndex\}_{\sourceIndex = 1}^{\numSourceArms}$ and $\{\targetArm_\targetIndex, \targetMean_\targetIndex\}_{\targetIndex = 1}^{\numTargetArms}$ are $\numSourceArms$ and $\numTargetArms$-armed MAB instances which we respectively call the \emph{source} and \emph{target} MAB instances and $\functionDef{\varTransferFunction}{\bbR^{\numSourceArms}}{(\bbR^{+})^{\numTargetArms}}$ is a \emph{known} multivariate function which we call the \emph{transfer function}.
    Here, we have written $\bbR^+ \coloneqq \bbR \cup \{\infty, -\infty\}$ to denote the extended real numbers. 
    Specifically, $\varTransferFunction$ relates the means of the target and sources arms in the  sense that 
    \[\targetMean = \varTransferFunction(\sourceMean),\] where $\sourceMean = (\sourceMean_1, \ldots, \sourceMean_\numSourceArms)$ and $\targetMean = (\targetMean_1, \ldots, \targetMean_\numTargetArms)$ refer to the vector of means for the source and target MAB instances.
    In this paper we study the special setting in which $\varTransferFunction$ is an additive function satisfying \[\targetMean_\targetIndex = \varTransferFunction_\targetIndex(\sourceMean) = \sum_{\sourceIndex = 1}^{\numSourceArms} \varTransferFunction_{\targetIndex, \sourceIndex}(\sourceMean_\sourceIndex).\]
    Here, and in the rest of this paper, $\sourceIndex$ will always be used to index source arms, and unless otherwise specified, $\targetIndex$ will be used to index target arms.
    As we discuss more in Section~\ref{subsec:special-settings}, this additive setting is already interesting as it captures a large number of existing problems in addition to introducing new problems.
    To provide more concrete intuition about our algorithm and sample complexity analysis, we will use two running examples: property testing and linear transfer functions.
    
     \tb{Property Testing.} In the property testing problem we are interested in identifying all arms $\sourceIndex \in [\numSourceArms]$ which satisfy some property $\sourceMean_\sourceIndex \in \propertySet_\sourceIndex \subset \bbR$. Our additive transfer framework is able to capture this problem. To do so, we first define 
        \begin{equation}
            \bbI_\propertySet(\mu) =     
            \begin{cases}
                1 & \mu \in \propertySet, \\
                -\infty & \mu \not\in \propertySet.
            \end{cases}
        \end{equation}
        Then for each set $\armSet \in 2^{[n]}$ we define a target arm whose mean is $\targetMean_\armSet = \sum_{\sourceIndex \in \armSet} \bbI_{\propertySet_\sourceIndex}(\mu_\sourceIndex)$.
        Clearly, the optimal target arm will be a function of all source arms for which $\sourceMean_\sourceIndex \in \propertySet_\sourceIndex$.
        We note that whenever we refer to the property testing problem, we will index the target arms with $\armSet$ instead of $\targetIndex$. 
        Additionally, for the property testing problem, we require $\epsilon = 0$. 

    \tb{Linear Transfer Functions.} Another useful special case for contextualizing our results is the setting where the transfer function is a linear transformation of the source means, so that 
    \begin{equation*}
        \targetMean_\targetIndex =  \sum_{\sourceIndex = 1}^{\numSourceArms} \varLinearTransformation_{\targetIndex, \sourceIndex} \sourceMean_\sourceIndex.
    \end{equation*}

    In our proposed framework, we restrict our ability to sample from the target arms, and only consider algorithms which are able to sample from the source arms. 
    We note that studying the problem where we have the ability to sample from both the target and source domains is an interesting problem for future work.
    Our objective is to develop algorithms which will return an $\epsilon$-optimal target arm with high probability. 
    Formally, we focus on an appropriately modified version of the fixed-confidence setting which we define as follows:
    \begin{definition}[$\epsilondelta$-correct]\label{def:epsilon-delta-correct}
        For any $\epsilon \geq 0$ and $\delta \in (0,1)$, we say that an algorithm $\varAlgorithm$ is $\epsilondelta$-correct for the transfer BAI problem if, with probability at least $1 - \delta$, and for every problem instance $(\{\sourceArm_\sourceIndex, \sourceMean_\sourceIndex\}_{\sourceIndex = 1}^{\numSourceArms}, \{\targetArm_\targetIndex, \targetMean_\targetIndex\}_{\targetIndex = 1}^{\numTargetArms}, \varTransferFunction)$, $\varAlgorithm$ stops and returns an $\epsilon$-optimal arm $\varSelectionRule\in [\numTargetArms]$ satisfying $\targetMean_{\varSelectionRule} +\epsilon \geq \max_{\targetIndex \in [\numTargetArms]} \targetMean_\targetIndex$. 
    \end{definition}
    
    As is standard with typical BAI algorithms, an algorithm for the transfer BAI problem is comprised of three components: a sampling rule, a stopping rule, and a selection rule. 
    Letting $\varSigmaAlgebra_\round = \sigma(X_1, \ldots, X_\round)$ denote the $\sigma$-algebra generated by the observations from the source arms up until time $\round$, we have 
        \begin{enumerate}
            \item a sampling rule, $\varSamplingRule_t$, which is a $\varSigmaAlgebra_{t-1}$-measurable function which selects the source arms to pull during round $t$;
            \item a stopping rule, $\varStoppingRule$, which is a $\varSigmaAlgebra_t$-measurable random variable which determines when the algorithm stops;
            \item a selection rule, $\varSelectionRule$, which is a $\varSigmaAlgebra_\varStoppingRule$-measurable function which outputs a guess of the optimal target arm $\targetIndex^*$.
        \end{enumerate}

    \subsection{Assumptions}
    Before proceeding, we briefly discuss our assumptions.
    Our first assumption places restrictions on the class of additive transfer functions which our algorithm is able to handle.
    \begin{assumption}[Assumptions on $\varTransferFunction$]\label{ass:transfer-function}
        We assume that $\varTransferFunction_{\targetIndex, \sourceIndex}$ is continuous at $\sourceMean_\sourceIndex$ for all $(\targetIndex, \sourceIndex) \in [\numTargetArms]\times[\numSourceArms]$.
    \end{assumption}
    
    We additionally assume that the observations from the source MAB instances are sub-Gaussian. 
    \begin{assumption}[$\varSubGaussianParam$-sub-Gaussian Observations]\label{ass:sub-gaussian-observations}
    We assume that the observations from the source arms are $\varSubGaussianParam$-sub-Gaussian so that for any $i \in [\numSourceArms]$ and $\lambda \in \bbR$ the following holds
    \begin{equation}
        \log \bbE_{X\sim\sourceArm_\sourceIndex}[\exp\braket{\lambda (X - \sourceMean_\sourceIndex)}] \leq \frac{\lambda^2 \varSubGaussianParam^2}{2}.
    \end{equation}
    \end{assumption}
    This assumption is necessary for the concentration inequalities used in the construction of our LUCB-style algorithm given in Section \ref{sec:algorithm}. 
    We note that this, with minimal modification, our assumption, algorithm, and the resulting sample complexity analysis can accommodate arbitrary sub-$\psi$ observations through the use of the concentration inequalities given by Howard et al. \cite{howard2018uniform} --- in Assumption~\ref{ass:sub-gaussian-observations}, we have implicitly set $\psi(\lambda) = \frac{\lambda^2}{2}$.
    However, to simplify the exposition, we limit the scope of this work to sub-Gaussian observations.
    Finally, without loss of generality, we assume that the means are ordered in decreasing order so that $\sourceMean_1 \geq \sourceMean_2 \ldots \geq \sourceMean_\numSourceArms$ and $\targetMean_1  \geq \targetMean_2 \geq \ldots \geq \targetMean_\numTargetArms$.
    We only require the optimal target arm to be unique when $\epsilon = 0$.

\section{Related Work}\label{sec:related-work}
    The work most closely resembling ours is a recent line of work on obtaining sample complexity guarantees for Monte Carlo tree search algorithms \cite{garivier2016maximin, kaufmann2017monte, huang2017structured}. 
    Specifically, Huang et al. \cite{huang2017structured} approach this problem by first introducing the more general structured BAI problem.
    Their structured BAI framework is the same as our transfer BAI framework, however we choose to use a different name to both emphasize that we are transferring knowledge between multiple MAB instances and to avoid confusing the structured BAI problem with the structured MAB framework described in Lattimore and Munos \cite{lattimore2014bounded} and Gupta et al. \cite{gupta2018exploiting}.
    
    While Huang et al. \cite{huang2017structured} give a general algorithm for their structured BAI problem, their primary objective was to derive algorithms for the Monte Carlo tree search problem.
    As such, their assumptions consequently make their algorithm inapplicable to wide range of settings including the simple linear setting discussed in Section~\ref{sec:problem_setup}.
    Their Assumption~2(i), which requires the transfer function to be component-wise monotonic, already restricts the applicability of their algorithm to a wide range of problems.
    However, we can resolve this issue by using our confidence sequence construction given in Section~\ref{sec:algorithm}. 
    Their Assumption~2(ii), however, is more troublesome as it requires the confidence sequence of each target arm to be contained in the confidence sequence of at least one source arm.
    To resolve this, Huang et al. \cite{huang2017structured} briefly mention a weaker assumption wherein the confidence sequence of each target arm must be contained in a scaled and shifted version of a source arm's confidence sequence --- however, this weaker assumption is still inapplicable even in the linear setting.
    Additionally, as we show in Appendix~\ref{appendix:micro_lucb}, the resulting sample complexity for this modified algorithm is significantly worse than the sample complexity of our algorithm.
    Finally, we note that the assumptions we make are incomparable to the assumptions made in Huang et al. \cite{huang2017structured} as neither is more or less general than the other.
    
    The simpler linear setting subsumed by our framework, where the transfer function takes the form $\varTransferFunction(\sourceMean) = \varLinearTransformation \sourceMean$ also coincides with the Transductive Linear Bandit problem studied in Fiez et al. \cite{fiez2019sequential} and Katz et al. \cite{katz2020empirical} when the sampling vectors are the standard basis of $\mathbb R^{\numSourceArms}$.
    However, it is not clear how to extend the ideas presented in these works to the additive setting since the algorithms strongly utilize the linearity in the problem.
    
    The `partition identification' problem  introduced by Juneja and Krishnasamy \cite{juneja2018sample} is also related to our work.  
    In fact, their framework can be seen as a generalization of the problem studied here. 
    However, in their work, Juneja and Krishnasamy \cite{juneja2018sample} primarily focus on providing lower bounds for variations of the partition identification problem and only briefly discuss an asymptotically optimal algorithm towards the end of their work. 
    Additionally, it is known that Confidence-Interval style algorithms (like the one we propose) outperform their Track-And-Stop style algorithm in so-called moderate-confidence regimes\footnote{By moderate confidence regimes we mean regimes where $\delta$ is moderately small, i.e when $\delta \approx .05$ or when it is inverse-polynomial in the number of measurements \cite{simchowitz2017simulator}.} \cite{simchowitz2017simulator}.
    Moreover, it is not clear that the algorithm they provide is can even implementable in the linear setting because implementing it requires solving a constrained optimization problem over a (possibly) non-convex set.
    Finally, the analysis in \cite{juneja2018sample} only provides asymptotic guarantees for their algorithm while we provide explicit finite-time guarantees for our algorithm. 

    \subsection{Subsumed Settings}\label{subsec:special-settings}
        Finally, as we alluded in Section~\ref{sec:problem_setup}, we now describe how the additive-transfer framework studied here subsumes a range of existing pure exploration problems. 
        In Section~\ref{sec:results}, we instantiate our sample complexity results for some of the problems mentioned below.
        
        \tb{TopK Identification.} In the TopK problem \cite{kalyanakrishnan2012pac, kaufmann2013information}, the objective is to identify the $K$ arms with the largest means. 
        To recover this problem in our formulation, we define the target means as follows. 
        We define a target arm $\targetArm_\armSet$ for each set $\armSet \in 2^{[\numTargetArms]}$ satisfying $|\armSet| = K$.
        The mean of this target arm is then defined as $\targetMean_\armSet = \sum_{\sourceIndex \in \armSet} \sourceMean_\sourceIndex$.
        
        \tb{Thresholding Bandits.} In the Thresholding Bandits problem \cite{locatelli2016optimal}, the objective is to identify the set of arms whose means are greater than some fixed threshold $\threshold \in \bbR$. 
        This problem is subsumed by the property testing problem mentioned earlier. To see this, we simply set, for each $\sourceIndex \in [\numSourceArms]$, $\propertySet_\sourceIndex = (\threshold, \infty)$. Then for every set $\armSet \in 2^{[n]}$ define the mean of target arm $\targetArm_\armSet$ as $\targetMean_\armSet = \sum_{\sourceIndex \in \armSet} \mathbb I_{\mathcal C_i}(\sourceMean_\sourceIndex)$.
        
        \tb{Combinatorial Pure Exploration.} As a final example, we show how our framework generalizes the Combinatorial Pure Exploration problem proposed by Chen et al. \cite{chen2014combinatorial}. This problem is defined by a decision class $\decisionClass \subseteq 2^{[\numSourceArms]}$ and the objective is to identify an element $\armSet \in \decisionClass$ satisfying $\armSet \in \argmax{\armSet \in \decisionClass} \sum_{\sourceIndex \in \armSet} \sourceMean_\sourceIndex$. It is easy to see that this problem fits into our framework by defining a target mean $\targetMean_\armSet = \sum_{\sourceIndex \in \armSet} \sourceMean_\sourceIndex$.  The Combinatorial Pure Exploration problem additionally subsumes a number of additional problems previously studied in the literature, including the examples discussed above. For more examples of subsumed problems and additional discussions, we refer the reader to the literature on this problem \cite{cao2019disagreement, chen2014combinatorial, chen2016pure, chen2017nearly, gabillon2016improved}.

\section{Algorithm}\label{sec:algorithm}
    In this section, we present the Transfer LUCB (T-LUCB) algorithm, a variant of the LUCB algorithm        \cite{kalyanakrishnan2012pac} used in the fixed-confidence BAI setting. 
    Like the LUCB algorithm, our T-LUCB algorithm is based on constructing confidence sequences which are time-uniform confidence intervals on the sample means. 
    Before presenting the \TLUCB algorithm, we first discuss the construction of our confidence sequences. 
    
    To construct the confidence sequences on the source arms we use standard Hoeffding-like confidence sequences \cite{kaufmann2016complexity, howard2018uniform} and define the Lower Confidence Bound (LCB), Upper Confidence Bound (UCB), and Confidence Interval (CI) sequences as follows. 
    Recall that {$I_s$ denotes the arm that is pulled at time $s$}.
    We let $N_{\sourceIndex}(t) = \sum_{s = 1}^{t-1} \indicator{I_s = i}$ denote the number of times that source arm $\sourceIndex$ has been pulled at the start of round $t$.
    Additionally, we let {$\widehat \mu_t(i) = \frac{1}{N_i(t)}\sum_{s = 1}^{t - 1} X_s \indicator{I_s = i}$} denote the empirical mean of arm $i$ at the beginning of round $\round$.
    Then, at $t=0$, we set the lower and upper confidence bounds for source arm $i$ as $\LCB{\sourceArm}{0}{\sourceIndex}{\delta} = -\infty$, $\UCB{\sourceArm}{0}{\sourceIndex}{\delta} = +\infty$. 
    Next, for $t \geq 1$, we recursively define the confidence sequences as:
        \edit{\begin{multline}
            \UCB{\sourceArm}{t}{\sourceIndex}{\delta}  \coloneqq \min\bigg\{\UCB{\sourceArm}{t-1}{\sourceIndex}{\delta}, \\ \widehat{\sourceMean}_t(\sourceIndex) + \beta(N_\sourceIndex(t),\edit{\delta/(2\numSourceArms)})\bigg\}\label{eq:source_lcb},
        \end{multline}
        \vspace{-0.5cm}
        \begin{multline}
            \LCB{\sourceArm}{t}{\sourceIndex}{\delta} \coloneqq \max\bigg\{\LCB{\sourceArm}{t-1}{\sourceIndex}{\delta}, \\ \widehat{\sourceMean}_t(\sourceIndex) - \beta(N_\sourceIndex(t),\edit{\delta/(2\numSourceArms)})\bigg\}\label{eq:source_ucb},
        \end{multline}}
        \begin{equation}
            \CI{\sourceArm}{t}{\sourceIndex}{\delta} \coloneqq [\LCB{\sourceArm}{t}{\sourceIndex}{\delta}, \UCB{S}{t}{\sourceIndex}{\delta}].
        \end{equation}
            
        Here $\beta(\cdot, \cdot)$ is a function which controls the rate at which the confidence intervals shrink.
        As an example, $\beta$ can be taken to be the so-called ``polynomial stitched boundary'' \cite[Eq.(6)]{howard2018uniform}:
        \begin{equation}\label{eq:finite-lil}
            \edit{\beta(t, \delta) \coloneqq 1.7\sqrt{\frac{\sigma^2\log\log\parenthesis{ 2t\sigma^2} + 0.72\log\frac{5.2}{\delta}}{t}}.}
        \end{equation}
        More generally, for the results given in Section~\ref{sec:results} to hold, $\beta$ must satisfy the following condition:
        \begin{equation}\label{eq:beta-condition}
            \bbP\braket{\exists t \geq 1: \mu_\sourceIndex \not \in \CI{\sourceArm}{t}{\sourceIndex}{\delta}} \leq \delta.
        \end{equation}
        The choice of $\beta$ in \eqref{eq:finite-lil} satisfies the above condition.
        Next, we use the source arm confidence sequences to construct confidence sequences on the target arms as follows:
        \edit{\begin{align}
            \LCB{\targetArm}{\round}{\targetIndex}{\delta}   &\coloneqq \sum_{\sourceIndex = 1}^{\numSourceArms} \min_{m_\sourceIndex \in \CI{\sourceArm}{\round}{\sourceIndex}{\delta}}\varTransferFunction_{\targetIndex, \sourceIndex}\parenthesis{m_\sourceIndex},\\
            \UCB{\targetArm}{\round}{\targetIndex}{\delta}   &\coloneqq \sum_{\sourceIndex = 1}^{\numSourceArms} \max_{m_\sourceIndex \in \CI{\sourceArm}{\round}{\sourceIndex}{\delta}}\varTransferFunction_{\targetIndex, \sourceIndex}\parenthesis{m_\sourceIndex},\\
            \CI{\targetArm}{\round}{\targetIndex}{\delta}    &\coloneqq [\LCB{\targetArm}{\round}{\targetIndex}{\delta}, \UCB{\targetArm}{\round}{\targetIndex}{\delta}].
        \end{align}}

        The intuition for the above construction is as follows. By constructing the source confidence sequences as defined in equations~\eqref{eq:source_lcb} and~\eqref{eq:source_ucb}, and choosing $\beta$ to satisfy condition~\eqref{eq:beta-condition}, we can control the deviations of the source samples means from the true source means. This in turn implies that the constructed target confidence sequences are well-behaved in the sense that they will contain the target arm means with high probability.
        This intuition is formalized by Lemma~\ref{lem:good-event-prob} in the Appendix.
        \paragraph{The T-LUCB Algorithm} 
        We are now ready to introduce the \TLUCB algorithm which is stated in Algorithm~\ref{alg:transfer-lucb}. 
        During each round, the algorithm selects two target arms $B_t$ and $C_t$ with the objective of separating the LCB of $B_t$ from the UCB of $C_t$.
        After selecting $B_t$ and $C_t$, the algorithm samples the source arms $I_t$ and $J_t$ which respectively have the largest contributions to the length of the confidence sequences of $B_t$ and $C_t$. 
        Formally, we define the following quantity
        \begin{equation}\label{eq:tlucb-length}
               \ealgLength{\sourceIndex}{\targetIndex}{t} = \max_{m \in \CI{\sourceArm}{\round}{\sourceIndex}{\delta}} \varTransferFunction_{\targetIndex, \sourceIndex}(m) -  \min_{m \in \CI{\sourceArm}{\round}{\sourceIndex}{\delta}} \varTransferFunction_{\targetIndex, \sourceIndex}(m),
        \end{equation}
        which quantifies the amount of uncertainty that source arm $\sourceIndex$ contributes to target arm $\targetIndex$.
        The algorithm stops when the LCB of $B_t$ is greater than the UCB of $C_t$.
        Finally the algorithm selects $B_t$ as its guess for the optimal target arm.

        \begin{algorithm}[!ht]~\caption{Additive Transfer LUCB}
        \label{alg:transfer-lucb}
            \textbf{Input} $\delta > 0, \epsilon \geq 0$, $\varTransferFunction$, $\sigma^2$\;
            Sample each source arm once\;
            \For{$\round = 1, 2, \ldots$}{
                $B_\round = \argmax{\targetIndex \in [\numTargetArms]} \LCB{\targetArm}{\round}{\targetIndex}{\delta}$\;
                $C_\round = \argmax{\targetIndex \in [\numTargetArms], \targetIndex \neq B_\round} \UCB{\targetArm}{\round}{\targetIndex}{\delta}$\;
                \If{$\LCB{\targetArm}{\round}{B_\round}{\delta} + \epsilon \geq \UCB{\targetArm}{\round}{C_\round}{\delta}$}{
                    \Return $\varSelectionRule = B_t$\;
                }
                $I_t = \argmax{\sourceIndex \in [\numSourceArms]} \ealgLength{\sourceIndex}{B_\round}{\round}$ \;
                $J_t = \argmax{\sourceIndex \in [\numSourceArms]} \ealgLength{\sourceIndex}{C_\round}{\round}$ \;
                Observe $X_{t, 1} \sim \sourceArm_{I_t}$ and $X_{t, 2} \sim \sourceArm_{J_t}$\;
            }
            
    \end{algorithm}
    
\section{Results}\label{sec:results}
   In this section, we analyze the \TLUCB algorithm presented in Section~\ref{sec:algorithm}.
        Our first result shows that, regardless of the sampling rule, the stopping rule and selection rule of Algorithm~\ref{alg:transfer-lucb} are sufficient to give an $\epsilondelta$-correct algorithm. The proof of this result can be found in Section~\ref{sec:proofs} in the Appendix. 
    \begin{theorem}\label{thm:correctness}
        Suppose that $\beta$ satisfies condition~\eqref{eq:beta-condition}. Then, any algorithm which stops when there exists an arm $\targetIndex \in [\numTargetArms]$ such that
        \begin{equation}
            \LCB{\targetArm}{t}{\targetIndex}{\delta} \edit{+ \epsilon}\geq \UCB{\targetArm}{t}{\targetIndex'}{\delta},
        \end{equation}
        for all \edit{$\targetIndex' \neq \targetIndex$}, and selects the arm $\varSelectionRule = \targetIndex$, will with probability at least $1 - \delta$, choose an arm satisfying $\targetMean_{\varSelectionRule} \geq \targetMean_1 - \epsilon$.
    \end{theorem}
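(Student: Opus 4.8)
The plan is to split the argument into a probabilistic core, which is already isolated as Lemma~\ref{lem:good-event-prob}, and a short deterministic consequence of the stopping and selection rules. First I would invoke Lemma~\ref{lem:good-event-prob} to produce a ``good event'' $\mathcal E$ with $\bbP(\mathcal E) \ge 1 - \delta$ on which every target mean is simultaneously sandwiched by its confidence sequence, that is $\LCB{\targetArm}{\round}{\targetIndex}{\delta} \le \targetMean_\targetIndex \le \UCB{\targetArm}{\round}{\targetIndex}{\delta}$ for all $\targetIndex \in [\numTargetArms]$ and all $\round \ge 1$. The only facts this needs are that, on $\mathcal E$, each source mean satisfies $\sourceMean_\sourceIndex \in \CI{\sourceArm}{\round}{\sourceIndex}{\delta}$ for every $\round$, together with the additive decomposition $\targetMean_\targetIndex = \sum_{\sourceIndex} \varTransferFunction_{\targetIndex,\sourceIndex}(\sourceMean_\sourceIndex)$ and the termwise bounds $\min_{m \in \CI{\sourceArm}{\round}{\sourceIndex}{\delta}} \varTransferFunction_{\targetIndex,\sourceIndex}(m) \le \varTransferFunction_{\targetIndex,\sourceIndex}(\sourceMean_\sourceIndex) \le \max_{m \in \CI{\sourceArm}{\round}{\sourceIndex}{\delta}} \varTransferFunction_{\targetIndex,\sourceIndex}(m)$; summing the last display over $\sourceIndex$ reproduces the target sandwich. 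Continuity of each $\varTransferFunction_{\targetIndex,\sourceIndex}$ (Assumption~\ref{ass:transfer-function}) is what makes the min and max attained so that the target bounds are well defined, but this bookkeeping belongs to Lemma~\ref{lem:good-event-prob} rather than here.

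Next I would work entirely on $\mathcal E$ and show the stopping and selection rules force $\epsilon$-optimality. Suppose the algorithm halts at round $\round$ with an arm $\targetIndex$ obeying $\LCB{\targetArm}{\round}{\targetIndex}{\delta} + \epsilon \ge \UCB{\targetArm}{\round}{\targetIndex'}{\delta}$ for every $\targetIndex' \ne \targetIndex$, and selects $\varSelectionRule = \targetIndex$. If $\targetIndex = 1$ the claim is trivial since $\targetMean_{\varSelectionRule} = \targetMean_1$. Otherwise I would instantiate the stopping inequality at the single competitor $\targetIndex' = 1$, obtaining \[ \LCB{\targetArm}{\round}{\targetIndex}{\delta} + \epsilon \ge \UCB{\targetArm}{\round}{1}{\delta}. \] Chaining the two halves of the good-event sandwich, namely $\targetMean_{\targetIndex} \ge \LCB{\targetArm}{\round}{\targetIndex}{\delta}$ on the left and $\UCB{\targetArm}{\round}{1}{\delta} \ge \targetMean_1$ on the right, collapses this to $\targetMean_{\varSelectionRule} + \epsilon \ge \targetMean_1$, which is exactly the asserted $\epsilon$-optimality. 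Since this holds deterministically on $\mathcal E$ and $\bbP(\mathcal E) \ge 1 - \delta$, the correctness guarantee of Definition~\ref{def:epsilon-delta-correct} follows.

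I would emphasize that this argument never refers to the sampling rule, which matches the statement that correctness is governed solely by the stopping and selection rules. Consequently the only genuinely probabilistic content, and hence the main obstacle, lives inside Lemma~\ref{lem:good-event-prob}: one must push the time-uniform source coverage --- guaranteed by condition~\eqref{eq:beta-condition} applied at level $\delta/(2\numSourceArms)$ --- through the nonlinear maps $\varTransferFunction_{\targetIndex,\sourceIndex}$ to the target sequences, while keeping the union bound over the $\numSourceArms$ source arms controlled, since $\numSourceArms \cdot \delta/(2\numSourceArms) = \delta/2 \le \delta$. Everything after that good-event bound is the short deterministic chaining above, where I expect no real difficulty beyond choosing $\targetIndex' = 1$ as the competitor and disposing of the trivial $\targetIndex = 1$ case.
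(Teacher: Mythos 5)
Your proposal is correct and follows essentially the same route as the paper: invoke Lemma~\ref{lem:good-event-prob} to get the good event on which every target mean lies in its confidence sequence, then chain $\targetMean_{\varSelectionRule} + \epsilon \geq \LCB{\targetArm}{\round}{\varSelectionRule}{\delta} + \epsilon \geq \UCB{\targetArm}{\round}{1}{\delta} \geq \targetMean_1$ after disposing of the trivial case where the selected arm is already optimal. The only cosmetic difference is that you instantiate the stopping inequality directly at the competitor $\targetIndex' = 1$, whereas the paper routes the same chain through $C_t$ and the observation $\UCB{\targetArm}{\round}{C_t}{\delta} \geq \UCB{\targetArm}{\round}{1}{\delta}$; your version is if anything slightly cleaner for the theorem as stated, since it applies to any algorithm with this stopping rule rather than to Algorithm~\ref{alg:transfer-lucb} specifically.
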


    We now shift our attention towards providing a high probability upper bound on the sample complexity of Algorithm~\ref{alg:transfer-lucb}. To present our function specific upper-bound on the sample complexity we first introduce some additional notation. 
    We remark that due to the generality of our framework our generic sample complexity bound is presented implicitly, and is difficult to immediately interpret. 
    As such, we will present explicit bounds for some instantiations of our problem in the following subsection.
    
    First, we define 
    \begin{equation}
        \varSparsity_\targetIndex \coloneqq |\{\sourceIndex : \varTransferFunction_{\targetIndex, \sourceIndex}(x) \neq \varTransferFunction_{\targetIndex, \sourceIndex}(y), \ \forall x, y \in \bbR\}|,
    \end{equation}
    which measures the number of source arms which contribute to the uncertainty of a target arm. 
    For the property testing problem, $\varSparsity_\targetIndex = |M|$, which is the number of terms in the sum $\sum_{\sourceIndex \in \armSet} \mathbb I_{\propertySet_\sourceIndex}(\sourceMean_\sourceIndex)$.
    For linear transfer functions, $\varSparsity_\targetIndex = |\{\sourceIndex: \varLinearTransformation_{\targetIndex, \sourceIndex} \neq 0\}$ which measures the sparsity of the vector $\varLinearTransformation_\targetIndex$.

    Next, with a slight abuse of notation, we define the following quantity which has a similar form to equation~\ref{eq:tlucb-length}
    \begin{multline}
                \ecomplexityLength{\sourceIndex}{\targetIndex}{\round}{x} = \max_{m \in [x, x + 2\beta(t, \delta)]} \varTransferFunction_{\targetIndex, \sourceIndex}(m) \\- \min_{m \in [x, x + 2\beta(\round, \delta)]} \varTransferFunction_{\targetIndex, \sourceIndex}(m).
    \end{multline}

    This term quantifies how much source arm $\sourceIndex$ contributes to the confidence interval of target arm $\targetIndex$ when the LCB of source arm $\sourceIndex$ is $x$. 
    For the property testing problem, we have 
    \begin{equation}\label{eq:property-testing-length}
        \ecomplexityLength{\sourceIndex}{\armSet}{\round}{x} = 
        \begin{cases}
            0       &   \text{if } [x, x + 2\beta(t, \delta)] \subseteq \propertySet_\sourceIndex, \sourceIndex \in \armSet  \\
            0       &   \text{if } [x, x + 2\beta(t, \delta)] \subseteq \propertySet_\sourceIndex^c, \sourceIndex \in \armSet \\
            0       &   \text{if } \sourceIndex \not \in \armSet \\
            \infty &   \text{otherwise}
        \end{cases},    
    \end{equation}
    where $\propertySet^c_{\sourceIndex}$ is the complement $\propertySet_\sourceIndex$ and we have taken the convention that $\infty - \infty = 0$.
    For linear transfer functions, this quantity is independent of $x$ so that $\ecomplexityLength{\sourceIndex}{\targetIndex}{\round}{x} = 2|\varLinearTransformation_{\targetIndex, \sourceIndex}|\beta(\round, \delta)$. 
    Having defined this quantity, we are now ready to define an upper bound on the number of times source arm $\sourceIndex$ needs to be sampled in order to determine if target arm $\targetIndex$ is $\epsilon$-optimal. First, we set
    \begin{multline}
         \varStoppingTime_{\targetIndex, \sourceIndex} = \min\bigg\{t \in \bbN: \sup_{x \in [\sourceMean_\sourceIndex - 2\beta(\round, \delta), \sourceMean_\sourceIndex]}  \ecomplexityLength{\sourceIndex}{\targetIndex}{\round}{x} \\< \frac{\max\braket{|\varImaginaryTargetArm - \targetMean_\targetIndex|, \epsilon/2}}{\varSparsity_\targetIndex}\bigg\},
    \end{multline}
    where $\varImaginaryTargetArm \coloneqq \frac{\targetMean_1 + \targetMean_2}{2}$. 
    Then, we define 
    \begin{equation}\label{eq:source-arm-stopping-time}
        \varStoppingTime_{\sourceIndex} = \max_{\targetIndex \in [\numTargetArms]} \varStoppingTime_{\targetIndex, \sourceIndex},
    \end{equation}
    which represents the number of times source arm $\sourceIndex$ must be pulled in order to determine which target arms are $\epsilon$-optimal. We are now ready to state our sample complexity result.
    \begin{theorem}[Sample Complexity Upper Bound of Algorithm~\ref{alg:transfer-lucb}]\label{thm:tlucb-sample-complexity}
    Let $\varStoppingTime$ denote the stopping time of Algorithm~\ref{alg:transfer-lucb}. Then with probability at least $1 - \delta$, we have that 
    \begin{equation}
        \varStoppingTime \leq \sum_{\sourceIndex \in [\numSourceArms]} \varStoppingTime_{\sourceIndex}.
    \end{equation}
    \end{theorem}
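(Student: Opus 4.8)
The plan is to carry the whole argument on the good event and then reduce the claim to a deterministic counting argument. I first invoke Lemma~\ref{lem:good-event-prob} together with condition~\eqref{eq:beta-condition} to get $\bbP(\mathcal{E})\ge 1-\delta$, where $\mathcal{E}$ is the event that $\sourceMean_\sourceIndex\in\CI{\sourceArm}{\round}{\sourceIndex}{\delta}$ for every source arm and round, which in turn forces $\targetMean_\targetIndex\in\CI{\targetArm}{\round}{\targetIndex}{\delta}$ for every target arm and round. Everything below is deterministic on $\mathcal{E}$, so it suffices to bound the number of rounds the loop executes before the stopping rule fires. Throughout I write $\mathrm{len}_\targetIndex(\round)=\UCB{\targetArm}{\round}{\targetIndex}{\delta}-\LCB{\targetArm}{\round}{\targetIndex}{\delta}=\sum_{\sourceIndex}\ealgLength{\sourceIndex}{\targetIndex}{\round}$ for the interval length and $\Delta^{+}_\targetIndex=\max\{|\varImaginaryTargetArm-\targetMean_\targetIndex|,\epsilon/2\}$.

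\emph{Step 1: from pull counts to interval lengths.} On $\mathcal{E}$ the source interval contains $\sourceMean_\sourceIndex$ and has length at most $2\beta(N_\sourceIndex(\round),\delta/(2\numSourceArms))$, so (writing $\beta$ for this quantity) its lower endpoint lies in $[\sourceMean_\sourceIndex-2\beta,\sourceMean_\sourceIndex]$; hence the contribution $\ealgLength{\sourceIndex}{\targetIndex}{\round}$ of source $\sourceIndex$ to target $\targetIndex$ is at most $\sup_{x\in[\sourceMean_\sourceIndex-2\beta,\sourceMean_\sourceIndex]}\ecomplexityLength{\sourceIndex}{\targetIndex}{\round}{x}$. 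By the definition of $\varStoppingTime_{\targetIndex,\sourceIndex}$ this supremum, which is non-increasing in the number of pulls, drops below $\Delta^{+}_\targetIndex/\varSparsity_\targetIndex$ once $N_\sourceIndex(\round)\ge\varStoppingTime_{\targetIndex,\sourceIndex}$. Since $I_\round$ is the \emph{largest} contributor to $B_\round$ and only $\varSparsity_{B_\round}$ sources contribute at all, $N_{I_\round}(\round)\ge\varStoppingTime_{B_\round,I_\round}$ would force $\mathrm{len}_{B_\round}(\round)<\Delta^{+}_{B_\round}$, and symmetrically for $C_\round,J_\round$. Equivalently, whenever $\mathrm{len}_{B_\round}(\round)\ge\Delta^{+}_{B_\round}$ we must have $N_{I_\round}(\round)<\varStoppingTime_{B_\round,I_\round}\le\varStoppingTime_{I_\round}$, and symmetrically on the $C_\round$ side.

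\emph{Main obstacle (Step 2): the per-round lemma.} It remains to show that if $\mathrm{len}_{B_\round}(\round)<\Delta^{+}_{B_\round}$ and $\mathrm{len}_{C_\round}(\round)<\Delta^{+}_{C_\round}$ then the test $\LCB{\targetArm}{\round}{B_\round}{\delta}+\epsilon\ge\UCB{\targetArm}{\round}{C_\round}{\delta}$ fires; this is where the midpoint $\varImaginaryTargetArm$ and the truncation at $\epsilon/2$ do the real work, and I expect it to be the hardest part. The clean route is to establish $\LCB{\targetArm}{\round}{B_\round}{\delta}\ge\varImaginaryTargetArm-\epsilon/2$ and $\UCB{\targetArm}{\round}{C_\round}{\delta}\le\varImaginaryTargetArm+\epsilon/2$, which together imply the test. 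On $\mathcal{E}$, $\LCB{\targetArm}{\round}{B_\round}{\delta}>\targetMean_{B_\round}-\mathrm{len}_{B_\round}(\round)>\targetMean_{B_\round}-\Delta^{+}_{B_\round}$, and when $\targetMean_{B_\round}\ge\varImaginaryTargetArm$ a short check of the two branches of $\Delta^{+}_{B_\round}$ gives $\targetMean_{B_\round}-\Delta^{+}_{B_\round}=\min\{\varImaginaryTargetArm,\targetMean_{B_\round}-\epsilon/2\}\ge\varImaginaryTargetArm-\epsilon/2$; the symmetric statement bounds $\UCB{\targetArm}{\round}{C_\round}{\delta}$ when $\targetMean_{C_\round}\le\varImaginaryTargetArm$. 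The delicate part is the remaining configurations, where $B_\round$ sits below or $C_\round$ above the midpoint: here one must use the selection rules. Since $B_\round=\argmax{\targetIndex}\LCB{\targetArm}{\round}{\targetIndex}{\delta}$ we have $\LCB{\targetArm}{\round}{C_\round}{\delta}\le\LCB{\targetArm}{\round}{B_\round}{\delta}$, hence $\UCB{\targetArm}{\round}{C_\round}{\delta}\le\LCB{\targetArm}{\round}{B_\round}{\delta}+\mathrm{len}_{C_\round}(\round)$, and the crux is to show that on $\mathcal{E}$ these configurations are \emph{feasible} only when $\mathrm{len}_{C_\round}(\round)<\epsilon$ (and symmetrically $\mathrm{len}_{B_\round}(\round)<\epsilon$), which again yields the test. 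Ruling out the infeasible configurations — for instance that $\targetMean_{B_\round}$ lies far below $\varImaginaryTargetArm$ together with the small-length hypotheses is impossible, because $B_\round$ is the highest-LCB arm and a small-width arm near $\targetMean_1$ would dominate it — is the technical heart of the proof.

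\emph{Step 3: counting.} Given the per-round lemma, I conclude by charging. Every non-terminating round pulls $I_\round$ and $J_\round$, and by Steps 1--2 at least one of them, say source $\sourceIndex$, satisfies $N_\sourceIndex(\round)<\varStoppingTime_\sourceIndex$; charge the round to that source at its current pull count. Because a source charged in this way is pulled during the round, the pull counts at which a fixed source $\sourceIndex$ is charged are strictly increasing and all strictly below $\varStoppingTime_\sourceIndex$, so $\sourceIndex$ receives at most $\varStoppingTime_\sourceIndex$ charges. Summing over sources and using that each non-terminating round is charged exactly once bounds the number of rounds, i.e.\ $\varStoppingTime$, by $\sum_{\sourceIndex}\varStoppingTime_\sourceIndex$ on $\mathcal{E}$, which occurs with probability at least $1-\delta$.
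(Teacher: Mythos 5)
Your overall architecture is the same as the paper's: work on the good event of Lemma~\ref{lem:good-event-prob}, prove a per-round dichotomy (if the test has not fired, one of $B_\round, C_\round$ has a confidence interval of width at least $\varMaxTerm{P_\round}$, hence the corresponding $I_\round$ or $J_\round$ is still under-sampled), and then charge each non-terminating round to an under-sampled source arm. Your Step 1 is exactly the paper's first lemma in Section~\ref{subsec:additive-proofs} (the pigeonhole over the $\varSparsity_{P_\round}$ contributing sources, using that $I_\round, J_\round$ are the largest contributors and that on $\varGoodEvent$ the LCB of a source lies in $[\sourceMean_\sourceIndex - 2\beta, \sourceMean_\sourceIndex]$), and your Step 3 charging argument is the paper's final summation. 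Both of those are correct.

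The genuine gap is Step 2, which is precisely the paper's Lemma~\ref{lem:target_confidence_interval_overlap_1}, and you do not actually prove it: you handle only the configurations $\targetMean_{B_\round} \geq \varImaginaryTargetArm$ and $\targetMean_{C_\round} \leq \varImaginaryTargetArm$, and for the remaining configurations you state what would need to be ruled out and call it ``the technical heart of the proof'' without carrying it out. Moreover, your proposed ``clean route'' ($\LCB{\targetArm}{\round}{B_\round}{\delta} \geq \varImaginaryTargetArm - \epsilon/2$ and $\UCB{\targetArm}{\round}{C_\round}{\delta} \leq \varImaginaryTargetArm + \epsilon/2$) is false in general: when $B_\round \neq 1$ and $\epsilon = 0$, the hypothesis $\UCB{\targetArm}{\round}{B_\round}{\delta} - \LCB{\targetArm}{\round}{B_\round}{\delta} < |\varImaginaryTargetArm - \targetMean_{B_\round}|$ forces $\UCB{\targetArm}{\round}{B_\round}{\delta} < \varImaginaryTargetArm$, so the first inequality cannot hold; the conclusion must instead come from a contradiction. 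The paper closes this by a four-way case analysis on where $\varImaginaryTargetArm$ sits relative to $\CI{\targetArm}{\round}{B_\round}{\delta}$ and $\CI{\targetArm}{\round}{C_\round}{\delta}$, using three facts: (i) on $\varGoodEvent$ only arm $1$ can have $\LCB{\targetArm}{\round}{\targetIndex}{\delta} > \varImaginaryTargetArm$, so not both of $B_\round, C_\round$ can; (ii) $\max\{\UCB{\targetArm}{\round}{B_\round}{\delta}, \UCB{\targetArm}{\round}{C_\round}{\delta}\} \geq \UCB{\targetArm}{\round}{1}{\delta} \geq \targetMean_1 > \varImaginaryTargetArm$ by the definition of $C_\round$; (iii) the remaining configuration $\LCB{\targetArm}{\round}{B_\round}{\delta} > \varImaginaryTargetArm > \UCB{\targetArm}{\round}{C_\round}{\delta}$ contradicts the assumption that the algorithm has not stopped. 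A separate case for arms within $\epsilon/2$ of $\varImaginaryTargetArm$ uses $\LCB{\targetArm}{\round}{C_\round}{\delta} \leq \LCB{\targetArm}{\round}{B_\round}{\delta}$ and the not-stopped inequality to force $|\CI{\targetArm}{\round}{B_\round}{\delta}| + |\CI{\targetArm}{\round}{C_\round}{\delta}| > \epsilon$. Your sketch gestures at ingredients (i) and the role of the selection rule, but without this case analysis the per-round lemma, and hence the theorem, is not established.
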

    Note that this sample complexity bound is independent of the number of target arms. This fact allows us to recover the sample complexity of some existing problems as we show in the following subsection.
    
    
    Theorem~\ref{thm:tlucb-sample-complexity} implies the following sample complexity result for the property testing problem. 
    \begin{corollary}\label{cor:property-testing}
     Let $\varStoppingRule$ denote the stopping time of Algorithm \ref{alg:transfer-lucb} for the property testing problem and define 
        \begin{equation}\label{eq:sample-complexity-term}
            H \coloneqq \uosum{\sourceIndex = 1}{\numSourceArms} \frac{2}{\Delta^2_{\propertySet_\sourceIndex}(\sourceMean_\sourceIndex)},
        \end{equation}
        where
        \begin{equation*}
        \Delta_{\propertySet_\sourceIndex}(\sourceMean_\sourceIndex) = 
        \begin{cases}
            \inf_{x \in \propertySet^c_\sourceIndex} |x - \sourceMean_\sourceIndex| & \text{if } \sourceMean_\sourceIndex \in \propertySet_\sourceIndex \\
            \inf_{x \in \propertySet_\sourceIndex} |x - \sourceMean_\sourceIndex| & \text{if } \sourceMean_\sourceIndex \not \in \propertySet_\sourceIndex \\
        \end{cases}.
        \end{equation*}
        Then\footnote{We use $\Tilde{O}$ to refer to sample complexity results which are correct up to constant and $\log\log$ factors.}  with probability at least $1 - \delta$,
        \begin{equation}
            \tau \leq \Tilde{O}\left(H \log\left(\frac{1}{\delta}\right)\right).
        \end{equation}
    \end{corollary}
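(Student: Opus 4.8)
\emph{Proof plan.}
The plan is to specialize the generic guarantee $\varStoppingTime \le \sum_{\sourceIndex \in [\numSourceArms]} \varStoppingTime_\sourceIndex$ of Theorem~\ref{thm:tlucb-sample-complexity} to the property-testing instance by first identifying $\varStoppingTime_\sourceIndex$ in closed form and then inverting the stitched boundary~\eqref{eq:finite-lil}. Since Theorem~\ref{thm:tlucb-sample-complexity} already supplies the event of probability at least $1-\delta$ on which $\varStoppingTime \le \sum_\sourceIndex \varStoppingTime_\sourceIndex$, and since each $\varStoppingTime_\sourceIndex$ is a deterministic functional of the true means and of $\beta$, it suffices to bound $\sum_\sourceIndex \varStoppingTime_\sourceIndex$ by $\Tilde{O}(H\log(1/\delta))$ and transport the same high-probability statement to the corollary.

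First I would evaluate $\varStoppingTime_{\armSet,\sourceIndex}$ using the explicit piecewise form of $\ecomplexityLength{\sourceIndex}{\armSet}{\round}{x}$ given in~\eqref{eq:property-testing-length}. In property testing this quantity takes only the values $0$ and $\infty$, so the supremum appearing in the definition of $\varStoppingTime_{\armSet,\sourceIndex}$ is itself either $0$ or $\infty$. Because the right-hand threshold $\max\{|\varImaginaryTargetArm - \targetMean_\armSet|,\epsilon/2\}/\varSparsity_\armSet$ is strictly positive, the defining inequality is equivalent to the requirement that the supremum equal $0$. Establishing strict positivity is exactly where the standing assumptions enter: with $\epsilon = 0$ and a unique optimal target arm we have $\targetMean_1 > \targetMean_2$, so $\varImaginaryTargetArm = (\targetMean_1+\targetMean_2)/2$ lies strictly between them and $|\varImaginaryTargetArm - \targetMean_\armSet| \ge (\targetMean_1-\targetMean_2)/2 > 0$ for every target arm $\armSet$; hence only the sign of the threshold matters.

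Next I would characterize when the supremum is $0$. For $\sourceIndex \notin \armSet$ the quantity in~\eqref{eq:property-testing-length} vanishes identically and the condition holds immediately. For $\sourceIndex \in \armSet$, as $x$ ranges over $[\sourceMean_\sourceIndex - 2\beta(\round,\delta),\sourceMean_\sourceIndex]$ the windows $[x,x+2\beta(\round,\delta)]$ sweep out the interval $[\sourceMean_\sourceIndex - 2\beta(\round,\delta),\sourceMean_\sourceIndex + 2\beta(\round,\delta)]$; no window straddles the boundary of $\propertySet_\sourceIndex$ if and only if this whole interval lies on one side of the boundary, i.e.\ if and only if $2\beta(\round,\delta) < \Delta_{\propertySet_\sourceIndex}(\sourceMean_\sourceIndex)$ (if instead $2\beta(\round,\delta) \ge \Delta_{\propertySet_\sourceIndex}(\sourceMean_\sourceIndex)$, the choice $x=\sourceMean_\sourceIndex$ or $x = \sourceMean_\sourceIndex - 2\beta(\round,\delta)$ yields a straddling window and the supremum is $\infty$). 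Thus, for $\sourceIndex \in \armSet$, $\varStoppingTime_{\armSet,\sourceIndex}$ does not depend on $\armSet$, and since the $\sourceIndex \notin \armSet$ terms are dominated, the maximum over target arms in~\eqref{eq:source-arm-stopping-time} gives
\[ \varStoppingTime_\sourceIndex = \min\{\round \in \bbN : 2\beta(\round,\delta) < \Delta_{\propertySet_\sourceIndex}(\sourceMean_\sourceIndex)\}. \]

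Finally I would invert the boundary. Substituting~\eqref{eq:finite-lil} and squaring, the crossing condition $2\beta(\round,\delta) < \Delta_{\propertySet_\sourceIndex}(\sourceMean_\sourceIndex)$ reads $\round > c\,[\sigma^2\log\log(2\round\sigma^2) + \log(1/\delta)]/\Delta^2_{\propertySet_\sourceIndex}(\sourceMean_\sourceIndex)$ for an absolute constant $c$. Evaluating at the candidate $\round \asymp \log(1/\delta)/\Delta^2_{\propertySet_\sourceIndex}(\sourceMean_\sourceIndex)$ and absorbing the slowly growing $\log\log$ contribution into $\Tilde{O}$ yields $\varStoppingTime_\sourceIndex = \Tilde{O}\!\left(\log(1/\delta)/\Delta^2_{\propertySet_\sourceIndex}(\sourceMean_\sourceIndex)\right)$; summing over $\sourceIndex \in [\numSourceArms]$ and recalling the definition of $H$ in~\eqref{eq:sample-complexity-term} gives $\varStoppingTime \le \sum_\sourceIndex \varStoppingTime_\sourceIndex = \Tilde{O}(H\log(1/\delta))$ with probability at least $1-\delta$, as claimed. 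I expect the boundary inversion to be the main nuisance, since it requires handling the implicit $\log\log(2\round\sigma^2)$ dependence rigorously, whereas the reduction of $\varStoppingTime_\sourceIndex$ to a single boundary-crossing time is the conceptual crux.
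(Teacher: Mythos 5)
Your proposal is correct and follows essentially the same route as the paper: specialize Theorem~\ref{thm:tlucb-sample-complexity}, use the $0/\infty$ structure of $\ecomplexityLength{\sourceIndex}{\armSet}{\round}{x}$ from~\eqref{eq:property-testing-length} to reduce $\varStoppingTime_{\armSet,\sourceIndex}$ to the boundary-crossing condition $2\beta(\round,\delta) \lesssim \Delta_{\propertySet_\sourceIndex}(\sourceMean_\sourceIndex)$, and then invert the stitched boundary (the paper delegates this last step to Theorem~16 of Kaufmann et al., which you instead carry out by hand). Your added justification that the threshold $\max\{|\varImaginaryTargetArm-\targetMean_\armSet|,\epsilon/2\}/\varSparsity_\armSet$ is strictly positive, and that the swept windows cover $[\sourceMean_\sourceIndex-2\beta,\sourceMean_\sourceIndex+2\beta]$, only makes explicit what the paper leaves implicit.
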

    
    For linear transfer functions, we obtain the following result.
    \begin{corollary}\label{eq:cor-linear-sample-complexity}
     Let $\varStoppingRule$ denote the stopping time of Algorithm \ref{alg:transfer-lucb} for the linear transfer setting and define 
        \begin{equation}\label{eq:sample-complexity-term}
            \sampleComplexity{\varLinearTransformation, \targetMean, \sourceMean}{\epsilon} \coloneqq \uosum{\sourceIndex = 1}{\numSourceArms} \max_{\targetIndex \in [\numTargetArms]} \bigg\{\frac{\varSparsity_\targetIndex^2|\varLinearTransformation_{\targetIndex, \sourceIndex}|^2}{\varMaxTerm{\targetIndex}^2}\bigg\}.
        \end{equation}
        Then with probability at least $1 - \delta$,
        \begin{equation}
            \tau \leq \Tilde{O}\left(\sampleComplexity{\varLinearTransformation, \targetMean, \sourceMean}{\epsilon}\log\left(\frac{1}{\delta}\right)\right).
        \end{equation}
    \end{corollary}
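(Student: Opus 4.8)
The plan is to specialize the instance-implicit bound of Theorem~\ref{thm:tlucb-sample-complexity} to the linear setting by evaluating each per-source quantity $\varStoppingTime_\sourceIndex$ in closed form. Since Theorem~\ref{thm:tlucb-sample-complexity} already gives $\varStoppingTime \leq \sum_{\sourceIndex \in [\numSourceArms]} \varStoppingTime_\sourceIndex$ with probability at least $1-\delta$, and since $\varStoppingTime_\sourceIndex = \max_{\targetIndex} \varStoppingTime_{\targetIndex, \sourceIndex}$, it suffices to bound each $\varStoppingTime_{\targetIndex, \sourceIndex}$ and then assemble the maximum and the sum.

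First I would exploit the closed form of the complexity length in the linear case. As recorded just before the theorem, for linear transfer functions $\ecomplexityLength{\sourceIndex}{\targetIndex}{\round}{x} = 2|\varLinearTransformation_{\targetIndex, \sourceIndex}|\beta(\round, \delta)$ is independent of $x$, so the supremum over $x \in [\sourceMean_\sourceIndex - 2\beta(\round,\delta), \sourceMean_\sourceIndex]$ in the definition of $\varStoppingTime_{\targetIndex, \sourceIndex}$ is vacuous. Writing $\varMaxTerm{\targetIndex} = \max\{|\varImaginaryTargetArm - \targetMean_\targetIndex|, \epsilon/2\}$ for the numerator appearing there, $\varStoppingTime_{\targetIndex, \sourceIndex}$ reduces to the smallest round $t$ satisfying
\[
\beta(t, \delta) < \frac{\varMaxTerm{\targetIndex}}{2\varSparsity_\targetIndex |\varLinearTransformation_{\targetIndex, \sourceIndex}|}.
\]

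The central step is then to invert $\beta$. Using the stitched boundary of~\eqref{eq:finite-lil}, whose numerator scales like $\sigma^2 \log\log t + \log(1/\delta)$, I would adopt a guess-and-verify strategy: substitute the candidate $t_0 = C \sigma^2 \varSparsity_\targetIndex^2 |\varLinearTransformation_{\targetIndex, \sourceIndex}|^2 \varMaxTerm{\targetIndex}^{-2}\log(1/\delta)$ into $\beta$ and check that the strict inequality holds once the absolute constant $C$ is large enough, with the residual $\log\log t_0$ factor absorbed into the $\Tilde{O}$ notation. This yields
\[
\varStoppingTime_{\targetIndex, \sourceIndex} = \Tilde{O}\left(\frac{\varSparsity_\targetIndex^2 |\varLinearTransformation_{\targetIndex, \sourceIndex}|^2}{\varMaxTerm{\targetIndex}^2}\log\frac{1}{\delta}\right).
\]

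Finally I would take the maximum over $\targetIndex$ to obtain $\varStoppingTime_\sourceIndex$, sum over $\sourceIndex \in [\numSourceArms]$, and match the resulting terms against the definition of $\sampleComplexity{\varLinearTransformation, \targetMean, \sourceMean}{\epsilon}$, which completes the corollary. I expect the inversion of $\beta$ to be the only delicate point: because of the iterated-logarithm term there is no closed-form solution of $\beta(t,\delta)=c$, so the order of the crossing time must be certified by a transcendental-inequality argument (or an off-the-shelf lemma bounding such crossing times), with all $\sigma^2$ and $\log\log$ dependence swept into the $\Tilde{O}$. Everything else is direct substitution and finite maximum/sum bookkeeping.
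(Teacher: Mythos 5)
Your proposal is correct and follows essentially the same route as the paper: specialize $\ecomplexityLength{\sourceIndex}{\targetIndex}{\round}{x}$ to its $x$-independent linear form $2|\varLinearTransformation_{\targetIndex, \sourceIndex}|\beta(\round,\delta)$, reduce $\varStoppingTime_{\targetIndex,\sourceIndex}$ to the crossing time of $\beta$, invert, and then take the max over targets and sum over sources via Theorem~\ref{thm:tlucb-sample-complexity}. The only cosmetic difference is that the paper handles the inversion of $\beta$ by citing Theorem~16 of Kaufmann et al.\ rather than the guess-and-verify argument you sketch, which you yourself note is interchangeable with an off-the-shelf crossing-time lemma.
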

    \subsection{Instantiations of Theorem~\ref{thm:tlucb-sample-complexity}}\label{sec:instantiations}
        We now proceed to instantiate the sample complexity bound of Theorem~\ref{thm:tlucb-sample-complexity} for some previously studied settings. 
        In each of these settings we state an explicit bound which is a direct corollary of the sample complexity bound from Theorem~\ref{thm:tlucb-sample-complexity}. 
        Proofs of these results can be found in Appendix~\ref{app:instantiations}.

        \tb{BAI.} To recover the Best Arm Identification problem, we simply set $\targetMean_\targetIndex = \sourceMean_\sourceIndex$ so that the mean of each target arm is simply the mean of one of the source arms. First, we set $\overline{\sourceMean} = \frac{\sourceMean_1 + \sourceMean_2}{2}$.
        Then Theorem~\ref{thm:tlucb-sample-complexity} implies that \[\varStoppingTime \leq \Tilde{O}\parenthesis{\sum_{\sourceIndex = 1}^{\numSourceArms} \frac{1}{(\overline{\sourceMean} - \sourceMean_i)^2} \log (1/\delta)}.\] This recovers the  sample complexity of the original LUCB algorithm \cite{kalyanakrishnan2012pac}.
        
        \tb{Thresholding Bandits.} Here, Theorem~\ref{thm:tlucb-sample-complexity} implies that \[\varStoppingTime \leq \Tilde{O}\parenthesis{\sum_{\sourceIndex \in [\numSourceArms]} \frac{1}{(\sourceMean_\sourceIndex - \threshold)^2}\log(1/\delta)},\] which matches, up to iterated logarithmic factors, the problem's sample complexity lower bound given for the fixed confidence setting \cite{locatelli2016optimal}.
        
        
        \tb{TopK.} One example of a Combinatorial Pure Exploration problem is the so-called TopK problem where we wish to identify the $K$ largest means our of $\numSourceArms$ arms. This problem can be recovered in the CPE framework by letting $\decisionClass$ to be the all subsets of $\{1, \ldots, \numSourceArms\}$ with cardinality $K$. To state our sample complexity results in this setup, we first define $\overline{\sourceMean} = \frac{\sourceMean_{K} + \sourceMean_{K + 1}}{2}$.
        Then, Theorem~\ref{thm:tlucb-sample-complexity} implies that \[\varStoppingTime \leq \Tilde{O}\parenthesis{\sum_{\sourceIndex \in [\numSourceArms]} \frac{K^2}{(\sourceMean_\sourceIndex - \bar{\sourceMean})^2}\log(1/\delta)}.\] We remark that this sample complexity result is suboptimal by a factor of $K^2$ \cite{kalyanakrishnan2012pac, kaufmann2013information}. However, we conjecture that this is the price of generality of our framework. We refer the reader to Section~\ref{sec:conclusion} for more discussion on this.
        

\section{Conclusion}\label{sec:conclusion}
In this work we presented and analyzed an algorithm for leveraging additive relationships between two MAB instances to identify the best arm in a MAB instance without ever sampling from it. 

A first direction for future work would be to investigate if an algorithm for the additive transfer setting can recover the correct sample complexity results for the specialized settings such as the TopK problem.
We conjecture that this is not possible.
This is because algorithms for these simpler settings either implicitly or explicitly utilize a type of well-ordering property of the problem which does not generally hold for non-linear additive transfer functions.
This well-ordering property is made explicit in the work of Gabillon et al. \cite{gabillon2016improved}, and is implicitly utilized in the work of Fiez et al. \cite{fiez2019sequential}.
We briefly illustrate this well-ordering property.
Consider two target arms where we have a sub-optimal target-arm $\targetIndex$, and compare it with the target arm $\complement{\targetIndex}$ which determines $\targetIndex$ is sub-optimal with the fewest number of samples.
The well-ordering property in the linear setting states that the number of samples required to determine that $\targetIndex$ is sub-optimal is always fewer than than the number of samples requires to determine that $\complement{\targetIndex}$ is sub-optimal (assuming that $\complement{\targetIndex}$ is not optimal)\footnote{See Proposition 4 in the Appendix of \cite{gabillon2016improved}}.
It is possible to construct non-linear additive transfer functions for which this property does not hold, and as such, it is not clear if any algorithm can adapt to this well-ordering property when it is satisfied.

An issue with our proposed framework is that we assume the transfer function is known in advance.
Another interesting direction of future research is to study how to alleviate this requirement so that, for example, the transfer function can be learned from historical data. 
If this approach is taken, it may no longer be possible to identify a $\epsilon$-optimal target arm as the error introduced from estimating the transfer function might lead to a scenario where the true optimal target arm is not the optimal target arm under the approximate transfer function. 
We believe in this setting a more reasonable criterion to study is the \emph{simple regret} \cite{bubeck2009pure} under the assumption that the learned transfer function is close in norm to the true transfer function.

Furthermore, in this work we consider the setting where  we are unable to sample from the target MAB instance. 
Another interesting direction would be in developing algorithms which are able to sample from the target MAB instance with the caveat that doing so has some additional cost. 
This type of setting seems natural as it is often the case that making direct measurements of some system can be significantly more expensive than taking noisier auxiliary measurements of the system.
A concrete example of this is in the sim-to-real problem, where collecting observations from the real world is significantly more expensive than collecting observations from a computer simulation. 
Additionally, the ability to sample the target arm can allow for learning or refining the transfer function on the fly using few transfer queries.



\bibliographystyle{unsrt}
\bibliography{ref}

\newpage
\onecolumn
\appendices

\section{Comparison with Micro-LUCB from \cite{huang2017structured}}\label{appendix:micro_lucb}
In this section we provide an in-depth discussion and comparison of our Algorithm~\ref{alg:transfer-lucb} and a variant of the Micro-LUCB algorithm which is suitable for linear transfer functions.
We first restate their assumptions and demonstrate why the do not hold for our setting. In this assumption, we note that $\leq$ denotes a component wise ordering so $u \leq v$ is equivalent to stating $u_i \leq v_i$ for all $i$.
    \begin{assumption}[Assumption 2 of \cite{huang2017structured}]\label{ass:micro-lucb-assumption}
        The following hold:
        \begin{enumerate}[(i)]
            \item The mapping function $f$ is monotonous with respect to the partial order of vectors: for any $u, v \in \bbR^{\numSourceArms}$, $u\leq v$ implies $f(u)\leq f(v)$.\label{ass:micro-lucb-assumption_i}
            \item For any $u, v \in \bbR^{\numSourceArms}$, $u \leq v$, $\targetIndex \in [\numTargetArms]$, the set $D(\targetIndex, u, v) \coloneqq \{\sourceIndex \in [\numSourceArms] : [f_\targetIndex(u), f_\targetIndex(v)] \subset [u_\sourceIndex, v_\sourceIndex]\}$ is non-empty\label{ass:micro-lucb-assumption_ii}.
        \end{enumerate}
    \end{assumption}

To see that Assumption~\ref{ass:micro-lucb-assumption} (i) is not satisfied for arbitrary linear transformations, we set some entries of the associated matrix to be negative, then there will exist some $\targetIndex$ for which $f_\targetIndex$ is not monotonous. 
This assumption is used to define the confidence intervals on the target arms, and without it, their proof of correctness does not hold.
We modify the assumption to the following which trivially holds true for any function:
    \begin{assumption}
        The mapping function $f$ is monotone with respect to the partial order of vectors: for any $u, v \in \bbR^{\numSourceArms}$, $u\leq v$ implies $\min_{u \leq m \leq v} f(m)\leq \max_{u \leq m \leq v}f(m)$\label{ass:modified_micro-lucb-assumption_i}.
    \end{assumption}
    It can be verified that if our target confidence sequences are constructed as 
    \begin{align}\label{eq:target_confidence_sequences}
                \LCB{\targetArm}{t}{\targetIndex}{\delta}   &\coloneqq \underset{m_i \in \CI{\sourceArm}{t}{\sourceIndex}{\delta}}{\min} f_{\targetIndex}(m), \\
                \UCB{\targetArm}{t}{\targetIndex}{\delta}   &\coloneqq \underset{m_i \in \CI{\sourceArm}{t}{\sourceIndex}{\delta}}{\max} f_{\targetIndex}(m), \\
                \CI{\targetArm}{t}{\targetIndex}{\delta}    &\coloneqq [\LCB{\sourceArm}{t}{\sourceIndex}{\delta}, \UCB{\sourceArm}{t}{\sourceIndex}{\delta}],
    \end{align}
    then the T-LUCB stopping rule and selection rule can be applied to any algorithm to give an $\epsilondelta$-correct algorithm.
    The proof of this is a simple modification of the proof of Theorem~\ref{thm:correctness} where we simply replace the construction of the target confidence sequences given in Section~\ref{sec:algorithm} with the construction defined above. 

We now switch our attention to Assumption~\ref{ass:micro-lucb-assumption} (ii). 
In short, Assumption~\ref{ass:micro-lucb-assumption} (ii) requires that for each target arm confidence interval, there exists at least one source arm confidence interval which contains the target arm confidence interval. 
This assumption is used to determine the set of source arms which should be sampled in the Micro-LUCB algorithm. 
Indeed, it is integral for the algorithm since, if the assumption is not satisfied, the sampling rule is not well defined. 
While this assumption is not directly satisfied for the linear setting, \cite{huang2017structured} mention one avenue for weakening the assumption so that it is satisfied for a larger class of functions. This weaker assumption is as follows:

        There exists some $a > 0, b \in \bbR$ such that for any $u, v \in \bbR^{\numSourceArms}$, $u \leq v$, $\targetIndex \in [\numTargetArms]$, the set $\Tilde{D}(\targetIndex,u,v) = \{\sourceIndex \in [\numSourceArms] : [f_\targetIndex(u), f_\targetIndex(v)] \subset [au_\sourceIndex + b, av_\sourceIndex + b]\}$ is non-empty.    
    
However, this assumption also is not well defined as $[f_a(u), f_a(v)]$ is not an interval unless $f_a$ is component-wise monotonically increasing.
To fix this, we propose the following assumption:
    
    \begin{assumption}[Modified Assumption 2(ii) of \cite{huang2017structured}]\label{ass:modified-micro-lucb-assumption}
        There exists some $a_\sourceIndex > 0, b_\sourceIndex \in \bbR$ such that for any $u, v \in \bbR^{\numSourceArms}$, $u \leq v$, $a \in [\numTargetArms]$, the set $\Tilde{D}(\targetIndex,u,v) = \{\sourceIndex \in [\numSourceArms] : \left[\min_{u \leq m \leq v} f_{\targetIndex}(m), \max_{u \leq m \leq v} f_{\targetIndex}(m)\right] \subset [a_\sourceIndex u_\sourceIndex + b_\sourceIndex, a_\sourceIndex v_\sourceIndex + b_\sourceIndex]\}$ is non-empty.  
    \end{assumption}
    
    \begin{remark}
        This modified assumption is indeed a generalization of the previous assumption, which can be seen by taking $a = 1, b=0$.
    \end{remark}
    
    This assumption then gives rise to a modified version of the Micro-LUCB algorithm which we state in Algorithm~\ref{alg:modified_micro_lucb}.
      \begin{algorithm}~\caption{Modified Micro-LUCB}
        \label{alg:modified_micro_lucb}
            Sample each source arm once.\;
            \For{$t = 1, 2, \ldots$}{
                $B_t = \argmax{\targetIndex \in [\numTargetArms]} \LCB{\sourceArm}{t}{\targetIndex}{\delta}$\;
                $C_t = \argmax{\targetIndex \in [\numTargetArms], \targetIndex \neq B_t} \UCB{\targetArm}{t}{\targetIndex}{\delta}$\;
                Choose any $I_t$ from $\Tilde{D}(B_t,\LCB{\sourceArm}{t}{B_t}{\delta}, \UCB{\sourceArm}{t}{B_t}{\delta})$\;
                Choose any $J_t$ from $\Tilde{D}(C_t,\LCB{\sourceArm}{t}{C_t}{\delta}, \UCB{\sourceArm}{t}{C_t}{\delta})$\;
                Observe $X_{t, 1} \sim S_{I_t}$ and $X_{t, 2} \sim S_{J_t}$\;
                Update $[\LCB{\sourceArm}{t}{I_t}{\delta}, \UCB{\sourceArm}{t}{I_t}{\delta}]$ and $[\LCB{\sourceArm}{t}{J_t}{\delta}, \UCB{\sourceArm}{t}{J_t}{\delta}]$\;
                \If{$\LCB{\targetArm}{t + 1}{B_t}{\delta} \geq \UCB{\targetArm}{t + 1}{C_t}{\delta}$}{
                    $\varSelectionRule \leftarrow B_t$\;
                    \Return $\varSelectionRule$\;
                }
            }

    \end{algorithm}
    
    It can be shown that only `diagonal' matrices satisfy the above assumption.
    We demonstrate this in the case $A \in \bbR_{\geq 0}^{2\times2}$ through the following proposition:
    \begin{proposition}\label{prop:micro-lucb-bad}
        Let $A \in \bbR^{2\times2}_{\geq 0}$. Suppose $A$ satisfies Assumption~\ref{ass:modified-micro-lucb-assumption}, then for $i = 1, 2$, either $A_{i1} = 0$ or $A_{i2}=0$.
    \end{proposition}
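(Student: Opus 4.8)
The plan is to argue by contraposition: assuming some row of $A$ has two strictly positive entries, I will exhibit a single pair $u \leq v$ and a target index for which the set $\Tilde{D}(\tau, u, v)$ of Assumption~\ref{ass:modified-micro-lucb-assumption} is empty, contradicting the hypothesis. First I would record what the assumption says in the linear, nonnegative case. Since every entry of $A$ is nonnegative, each $f_\tau(m) = A_{\tau 1} m_1 + A_{\tau 2} m_2$ is nondecreasing in both coordinates, so on any box its minimum and maximum are attained at $u$ and $v$; the image interval is therefore $[\,A_{\tau 1} u_1 + A_{\tau 2} u_2,\ A_{\tau 1} v_1 + A_{\tau 2} v_2\,]$. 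The assumption then asserts that for every box $u \leq v$ there is a source arm $i \in \{1,2\}$ with this interval contained in $[a_i u_i + b_i,\ a_i v_i + b_i]$, where the constants $a_i > 0$ and $b_i$ are fixed once and for all (independent of $u,v,\tau$). Suppose then that row $1$ has $A_{11} > 0$ and $A_{12} > 0$.

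The crux is the choice of box. I would take it degenerate in the second coordinate, $u = (t, s)$ and $v = (t+1, s)$. For this box the candidate interval of source arm $2$ is $[a_2 s + b_2,\ a_2 s + b_2]$, a single point, whereas the image interval of target $1$ has positive length $A_{11} > 0$; hence source arm $2$ cannot qualify, i.e.\ $2 \notin \Tilde{D}(1, u, v)$. For source arm $1$, containment at the upper endpoint forces $A_{11}(t+1) + A_{12} s \leq a_1 (t+1) + b_1$, that is $b_1 \geq (A_{11} - a_1)(t+1) + A_{12} s$. Because $A_{12} > 0$ and $b_1$ is a fixed constant, taking $s$ large enough makes the right-hand side exceed $b_1$, so source arm $1$ cannot qualify either, i.e.\ $1 \notin \Tilde{D}(1, u, v)$. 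With both source arms excluded, $\Tilde{D}(1, u, v) = \emptyset$, the desired contradiction. An identical argument with the roles of the two coordinates swapped handles row $2$, completing the proof.

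I expect the only real obstacle to be identifying the right box. The width conditions alone are \emph{not} enough: they only force $a_i \geq A_{\tau i}$ and are perfectly consistent with a row having two positive entries. The contradiction must instead come from the position constraint, exploiting that each shift $b_i$ is a single constant that cannot follow an unbounded translation of the box. Collapsing one coordinate (equivalently, translating the box to infinity along it) is exactly the leverage that defeats both source arms at once; once that box is written down, every remaining step is a one-line endpoint comparison.
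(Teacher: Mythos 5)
Your proof is correct, and although it rests on the same underlying mechanism as the paper's --- the constants $a_i>0$ and $b_i$ in Assumption~\ref{ass:modified-micro-lucb-assumption} are fixed once and for all, so translating the box $[u,v]$ arbitrarily far must eventually violate the \emph{positional} part of the containment constraint --- your instantiation is genuinely different and in one respect more careful. The paper keeps a general box, writes down the two endpoint inequalities $a_1u_1+b_1\leq A_{11}u_1+A_{12}u_2$ and $A_{11}v_1+A_{12}v_2\leq a_1v_1+b_1$, chooses $v_1$ large to force these into an equality, and concludes $A_{12}(v_2-u_2)=(A_{11}-a_1)(u_1-v_1)$ cannot hold for all boxes unless $A_{12}=0$. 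However, those inequalities are exactly the statement that source arm $1$ lies in $\tilde{D}(1,u,v)$, while the assumption only guarantees that \emph{some} source arm does; the paper never explains why arm $2$ cannot be the containing one. Your degenerate box $u=(t,s)$, $v=(t+1,s)$ disposes of that case outright: arm $2$'s candidate interval collapses to the point $a_2s+b_2$ and cannot contain the target interval of positive length $A_{11}$, so arm $1$ is forced, and sending $s\to\infty$ then overruns the fixed constant $b_1$ precisely because $A_{12}>0$. Your side remark is also accurate --- the width constraints alone are satisfiable by taking $a_1,a_2$ large, so the contradiction must come from position, not length. The symmetric claim for row $2$ goes through as you say (in fact the very same box works, with target index $2$ in place of $1$), so the argument is complete.
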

    \begin{proof}
        Let 
        \begin{equation*}
            A = \begin{bmatrix}
                    A_{11}       & A_{12} \\
                    A_{21}       & A_{22} \\
                \end{bmatrix},
        \end{equation*}
        where $A_{ij} \geq 0$. 
        Without loss of generality, we assume that $i = 1$ and $A_{11} \neq 0$, and we will demonstrate that this necessarily implies that $A_{12}=0$. 
        First, under Assumption~\ref{ass:modified-micro-lucb-assumption}, we know that 
            \begin{align}
                b_1 &\leq A_{11} u_1 + A_{12} u_2  - a_1 u_1, \\
                b_1 &\geq A_{11} v_1 + A_{12} v_2 - a_1 v_1.
            \end{align}
        Suppose we pick $v_1$ to satisfy 
        \begin{equation*}
            v_1 \geq \frac{A_{11}(u_1 - v_1) + A_{12}(u_2, v_2)}{a_1} + u_1.
        \end{equation*}
        Some straightforward algebra shows that 
        \begin{equation*}
            A_{11} u_1 + A_{12} u_2 - a_1 u_1 \leq A_{11} v_1 + A_{12} v_2 - a_1 v_1.
        \end{equation*}
        The above inequality then implies that
        \begin{equation*}
            b_1 \leq A_{11} u_1 + A_{12} u_2 - a_1 u_1 \leq A_{11} v_1 + A_{12} v_2 - a_1 v_1 \leq b_1,
        \end{equation*}
        which is only possible when 
        \begin{equation}\label{eq:prop_eq_1}
            A_{11} u_1 + A_{12} u_2 - a_1 u_1 = A_{11} v_1 + A_{12} v_2 - a_1 v_1.
        \end{equation}
        To see this is a contradiction, we rearrange equation~\eqref{eq:prop_eq_1} and observe that the following must hold for all $u \leq v$:
        \begin{equation*}
            A_{12}(v_2 - u_2) = (A_{11} - a_1)(u_1 - v_1).
        \end{equation*}
        However, this is cannot hold for all $u \leq v$ unless $A_{12} = (A_{11} - a_1) = 0$.
        This implies that $A_{12} = 0$.
        Therefore, $A_{12} = 0$, as desired. 
        (The same argument can be repeated to show that if $A_{12} \neq 0$, we must have $A_{11} = 0$). 
    \end{proof}
    \newpage

\section{Proofs of Results}\label{sec:proofs}
    This section contains the proofs for the results given in Section~\ref{sec:results}.

    \subsection{Miscellaneous Results}
    Our analyses rely on the events that the means of the source and target arms stay within their respective confidence sequences. 
    Formally, we define this `good event', $\varGoodEvent$ as follows
    \begin{align}
        \varSourceGoodEvent &\coloneqq \bigcap_{t \in \bbN} \bigcap_{\sourceIndex \in [\numSourceArms]} \braket{\sourceMean_\sourceIndex \in \CI{\sourceArm}{t}{\sourceIndex}{\delta}}\label{def:good-source-event}, \\
        \varTargetGoodEvent &\coloneqq \bigcap_{t \in \bbN} \bigcap_{\targetIndex \in [\numTargetArms]} \braket{\targetMean_\targetIndex \in \CI{\targetArm}{t}{\targetIndex}{\delta}}\label{def:good-target-event}, \\
        \varGoodEvent &\coloneqq \varSourceGoodEvent \bigcap \varTargetGoodEvent\label{def:good-event}.
    \end{align}
    
    If $\beta$ is chosen as to satisfy the condition in equation~\ref{eq:beta-condition}, then we can show that $\varGoodEvent$ occurs with probability at least than $1 - \delta$.
    \begin{lemma}\label{lem:good-event-prob}
        Assume $\beta$ is chosen to satisfy condition~\eqref{eq:beta-condition} so that 
        \begin{equation}
            \bbP\braket{\exists t \geq 1: \mu_\sourceIndex \not \in \CI{\sourceArm}{t}{\sourceIndex}{\delta}} \leq \delta.
        \end{equation}
        Then,
        \begin{equation}
            \bbP\braket{\varGoodEvent} \geq 1 - \delta,
        \end{equation}
        where $\varGoodEvent$ is defined as in equation~\eqref{def:good-event}.
    \end{lemma}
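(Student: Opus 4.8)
The plan is to establish the bound in two stages. First I would control the source good event $\varSourceGoodEvent$ directly by a union bound over the $\numSourceArms$ source arms, and then I would argue that $\varSourceGoodEvent$ already forces the target good event $\varTargetGoodEvent$, so that $\varGoodEvent = \varSourceGoodEvent \cap \varTargetGoodEvent = \varSourceGoodEvent$ and no further probability is lost on the target side.

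For the first stage, recall that the confidence sequence for each source arm $\sourceIndex$ is built from $\beta(N_\sourceIndex(t), \delta/(2\numSourceArms))$. Reading condition~\eqref{eq:beta-condition} as a property of the boundary $\beta(\cdot, \alpha)$ at a generic confidence level $\alpha$, it guarantees $\bbP\braket{\exists t \geq 1 : \sourceMean_\sourceIndex \notin \CI{\sourceArm}{t}{\sourceIndex}{\delta}} \leq \alpha$ when the sequence is built at level $\alpha$. Instantiating at the level $\alpha = \delta/(2\numSourceArms)$ actually used in the construction and taking a union bound over the $\numSourceArms$ source arms gives $\bbP\braket{\varSourceGoodEvent^c} \leq \numSourceArms \cdot \frac{\delta}{2\numSourceArms} = \frac{\delta}{2}$.

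The crux is the inclusion $\varSourceGoodEvent \subseteq \varTargetGoodEvent$. Fix a round $t$ and a target arm $\targetIndex$. On $\varSourceGoodEvent$ we have $\sourceMean_\sourceIndex \in \CI{\sourceArm}{t}{\sourceIndex}{\delta}$ for every $\sourceIndex$, so $\varTransferFunction_{\targetIndex,\sourceIndex}(\sourceMean_\sourceIndex)$ is a value of $\varTransferFunction_{\targetIndex,\sourceIndex}$ attained over $\CI{\sourceArm}{t}{\sourceIndex}{\delta}$ and hence is sandwiched between $\min_{m \in \CI{\sourceArm}{t}{\sourceIndex}{\delta}} \varTransferFunction_{\targetIndex,\sourceIndex}(m)$ and $\max_{m \in \CI{\sourceArm}{t}{\sourceIndex}{\delta}} \varTransferFunction_{\targetIndex,\sourceIndex}(m)$. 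Summing these coordinatewise inequalities over $\sourceIndex$ and invoking the additive structure $\targetMean_\targetIndex = \sum_\sourceIndex \varTransferFunction_{\targetIndex,\sourceIndex}(\sourceMean_\sourceIndex)$ together with the definitions of $\LCB{\targetArm}{t}{\targetIndex}{\delta}$ and $\UCB{\targetArm}{t}{\targetIndex}{\delta}$ yields $\LCB{\targetArm}{t}{\targetIndex}{\delta} \leq \targetMean_\targetIndex \leq \UCB{\targetArm}{t}{\targetIndex}{\delta}$, i.e. $\targetMean_\targetIndex \in \CI{\targetArm}{t}{\targetIndex}{\delta}$. Since $t$ and $\targetIndex$ were arbitrary, this shows $\varSourceGoodEvent \subseteq \varTargetGoodEvent$, whence $\bbP\braket{\varGoodEvent} = \bbP\braket{\varSourceGoodEvent} \geq 1 - \frac{\delta}{2} \geq 1 - \delta$.

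I expect the only delicate point to be the well-definedness of the sandwiching when $\varTransferFunction_{\targetIndex,\sourceIndex}$ takes values in the extended reals (as in property testing, where $\bbI_\propertySet$ can equal $-\infty$) and when $\CI{\sourceArm}{t}{\sourceIndex}{\delta}$ is unbounded for small $t$. There I would lean on Assumption~\ref{ass:transfer-function} (continuity of $\varTransferFunction_{\targetIndex,\sourceIndex}$ at $\sourceMean_\sourceIndex$) to ensure the coordinatewise extrema are attained, and on the stated convention $\infty - \infty = 0$ to keep the summed inequalities meaningful. Once these are handled the remaining argument is a routine union bound composed with the deterministic inclusion above.
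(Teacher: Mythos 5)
Your proposal is correct and follows essentially the same route as the paper: a union-bound control of $\varSourceGoodEvent$ from condition~\eqref{eq:beta-condition}, followed by the deterministic inclusion $\varSourceGoodEvent \subseteq \varTargetGoodEvent$ obtained by sandwiching each $\varTransferFunction_{\targetIndex,\sourceIndex}(\sourceMean_\sourceIndex)$ between the coordinatewise min and max over $\CI{\sourceArm}{t}{\sourceIndex}{\delta}$ and summing. Your version is if anything slightly more explicit than the paper's about how the per-arm level $\delta/(2\numSourceArms)$ combines into the overall bound.
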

    \begin{proof}
        The condition in equation~\eqref{eq:beta-condition} implies that $\bbP\braket{\varSourceGoodEvent} \geq 1 - \delta$. 
        To prove the result, we show that $\varSourceGoodEvent$ implies $\varTargetGoodEvent$ which directly implies that  $\bbP\braket{\varGoodEvent} = \bbP\braket{\varSourceGoodEvent} \geq 1 - \delta$. 
        To see this, we fix $\targetIndex \in [\numTargetArms]$ and observe that on the event $\varSourceGoodEvent$
        \begin{align*}
            &\LCB{\targetArm}{\round}{\targetIndex}{\delta} =  \sum_{\sourceIndex \in [\numSourceArms]} \min_{m_\sourceIndex \in \CI{\sourceArm}{\round}{\sourceIndex}{\delta}} \varTransferFunction_{\targetIndex, \sourceIndex}(m_\sourceIndex) \leq \sum_{\sourceIndex \in [\numSourceArms]} f_{\targetIndex, \sourceIndex}(\sourceMean_\sourceIndex) = \targetMean_\targetIndex, \\
            &\UCB{\targetArm}{\round}{\targetIndex}{\delta} = \sum_{\sourceIndex \in [\numSourceArms]} \max_{m_\sourceIndex \in \CI{\sourceArm}{\round}{\sourceIndex}{\delta}} f_{\targetIndex, \sourceIndex}(m_\sourceIndex) \geq \sum_{\sourceIndex \in [\numSourceArms]} f_{\targetIndex, \sourceIndex}(\sourceMean_\sourceIndex) = \targetMean_\targetIndex,
        \end{align*}
        so that $\LCB{\targetArm}{t}{\targetIndex}{\delta} \leq \targetMean_\targetIndex \leq \UCB{\targetArm}{t}{\targetIndex}{\delta}$. Since $\targetIndex$ is arbitrary, the above result holds for all $\targetIndex \in [\numTargetArms]$. We have just shown that $\varSourceGoodEvent$ implies $\varTargetGoodEvent$ so that $\bbP(\varGoodEvent) = \bbP(\varSourceGoodEvent) \geq 1 - \delta$ as desired.
    \end{proof}
   
    We now use this result to prove Theorem~\ref{thm:correctness} which concerns the correctness of Algorithm~\ref{alg:transfer-lucb}.
   \begin{proof}[Proof of Theorem~\ref{thm:correctness}]
       We observe that by Lemma~\ref{lem:good-event-prob}, the event $\goodEvent$ occurs with probability at least $1 - \delta$. In particular, this implies that for each target arm, $\targetIndex$, and for every round, $\round$, we have that $\targetMean_\targetIndex \in \CI{\targetArm}{\round}{\targetIndex}{\delta}$.
       Suppose that the stopping condition is met and recall that we have set $\targetIndex = 1$ to be an optimal target arm.
       Then, if $B_t$ is an optimal target arm, the algorithm clearly returns an $\epsilon$-optimal arm.
       Next, suppose that $B_t$ is not an optimal target arm.
       In this case, we observe that
       \begin{equation*}
           \targetMean_{B_t} + \epsilon \geq \LCB{\targetArm}{\round}{B_t}{\delta} \geq \UCB{\targetArm}{\round}{C_t}{\delta} \geq \UCB{\targetArm}{\round}{1}{\delta} \geq \targetMean_1 = \max_{\targetIndex \in [\numTargetArms]} \targetMean_\targetIndex,
       \end{equation*}
       which implies that $B_t$ is $\epsilon$-optimal and thus proves the correctness of our algorithm, as desired.
   \end{proof}
    
    \subsection{Results for Additive Transfer Functions}\label{subsec:additive-proofs}
    For the readers convenience, before presenting the proof of Theorem~\ref{thm:tlucb-sample-complexity}, we briefly review the notation introduced in Section~\ref{sec:results}.
    We let 
    \begin{equation}
        \ecomplexityLength{\sourceIndex}{\targetIndex}{\round}{x} \coloneqq \max_{m \in [x, x + 2\beta(t, \delta)]} \varTransferFunction_{\targetIndex, \sourceIndex}(m) - \min_{m \in [x, x + 2\beta(\round, \delta)]} \varTransferFunction_{\targetIndex, \sourceIndex}(m)
    \end{equation}
    to represent the length of target arm $\targetIndex$'s confidence interval contributed by source arm $\sourceIndex$ when $\LCB{\sourceArm}{\round}{\sourceIndex}{\delta} = x$. Next, we define 
    \begin{equation}
                 \varStoppingTime_{\targetIndex, \sourceIndex} = \min\braket{t \in \bbN: \sup_{x \in [\sourceMean_\sourceIndex - 2\beta(\round, \delta), \sourceMean_\sourceIndex]}  \ecomplexityLength{\sourceIndex}{\targetIndex}{\round}{x} < \frac{\max\braket{|\varImaginaryTargetArm - \targetMean_\targetIndex|, \epsilon/2}}{\varSparsity_\targetIndex}},
    \end{equation}
    and
    \begin{equation}
                \varStoppingTime_{\sourceIndex} = \max_{\targetIndex \in [\numTargetArms]} \varStoppingTime_{\targetIndex, \sourceIndex}.
    \end{equation}
    
    \begin{lemma}
    Let $(P_\round, Q_\round) \in \braket{(B_\round, I_\round), (C_\round, J_\round)}$. On the good event $\goodEvent$, if $N_{Q_\round}(\round) \geq \tau_{P_\round, Q_\round}$, then 
    \begin{equation}
        \UCB{\targetArm}{\round}{P_\round}{\delta} - \LCB{\targetArm}{\round}{P_\round}{\delta} \leq \max\braket{|\varImaginaryTargetArm - \targetMean_{P_\round}|, \epsilon/2}.
    \end{equation}
    \end{lemma}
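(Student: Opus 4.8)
The plan is to reduce the target confidence-interval width of $P_\round$ to a single per-source contribution and then control that contribution through the stopping time $\varStoppingTime_{P_\round, Q_\round}$. Directly from the definitions of $\LCB{\targetArm}{\round}{P_\round}{\delta}$ and $\UCB{\targetArm}{\round}{P_\round}{\delta}$, the width decomposes into a sum of source contributions,
\begin{equation*}
\UCB{\targetArm}{\round}{P_\round}{\delta} - \LCB{\targetArm}{\round}{P_\round}{\delta} = \sum_{\sourceIndex = 1}^{\numSourceArms} \ealgLength{\sourceIndex}{P_\round}{\round}.
\end{equation*}
Each source arm $\sourceIndex$ for which $\varTransferFunction_{P_\round, \sourceIndex}$ is constant contributes $\ealgLength{\sourceIndex}{P_\round}{\round} = 0$, so only $\varSparsity_{P_\round}$ summands can be nonzero; and since the algorithm selects $Q_\round$ as the source arm of largest contribution to $P_\round$, every summand is at most $\ealgLength{Q_\round}{P_\round}{\round}$. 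This gives $\UCB{\targetArm}{\round}{P_\round}{\delta} - \LCB{\targetArm}{\round}{P_\round}{\delta} \leq \varSparsity_{P_\round}\, \ealgLength{Q_\round}{P_\round}{\round}$, so it remains only to control the single quantity $\ealgLength{Q_\round}{P_\round}{\round}$.

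The core step is to pass from the \emph{realized} contribution $\ealgLength{Q_\round}{P_\round}{\round}$, which ranges $\varTransferFunction_{P_\round, Q_\round}$ over the actual interval $\CI{\sourceArm}{\round}{Q_\round}{\delta}$, to the \emph{idealized} quantity $\ecomplexityLength{Q_\round}{P_\round}{\cdot}{\cdot}$ appearing in the definition of $\varStoppingTime_{P_\round, Q_\round}$. Working on the good event $\goodEvent$ we have $\sourceMean_{Q_\round} \in \CI{\sourceArm}{\round}{Q_\round}{\delta}$, while the recursive construction of the source bounds guarantees that the interval has width at most $2\beta(N_{Q_\round}(\round), \delta)$ (up to the $\delta$-rescaling suppressed here). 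Setting $x = \LCB{\sourceArm}{\round}{Q_\round}{\delta}$, these two facts place $x$ in the window $[\sourceMean_{Q_\round} - 2\beta(N_{Q_\round}(\round), \delta), \sourceMean_{Q_\round}]$ and give the inclusion $\CI{\sourceArm}{\round}{Q_\round}{\delta} \subseteq [x, x + 2\beta(N_{Q_\round}(\round), \delta)]$, whence
\begin{equation*}
\ealgLength{Q_\round}{P_\round}{\round} \leq \ecomplexityLength{Q_\round}{P_\round}{N_{Q_\round}(\round)}{x} \leq \sup_{y \in [\sourceMean_{Q_\round} - 2\beta(N_{Q_\round}(\round), \delta),\, \sourceMean_{Q_\round}]} \ecomplexityLength{Q_\round}{P_\round}{N_{Q_\round}(\round)}{y}.
\end{equation*}

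To finish, I would combine this with the definition of $\varStoppingTime_{P_\round, Q_\round}$. The key observation is that the right-hand supremum is non-increasing in the number of pulls: as $\beta$ shrinks, both the scanning window $[\sourceMean_{Q_\round} - 2\beta, \sourceMean_{Q_\round}]$ and each evaluation window $[y, y + 2\beta]$ contract, so the attained range of $\varTransferFunction_{P_\round, Q_\round}$ can only decrease. Hence the hypothesis $N_{Q_\round}(\round) \geq \varStoppingTime_{P_\round, Q_\round}$ forces the supremum strictly below $\max\{|\varImaginaryTargetArm - \targetMean_{P_\round}|, \epsilon/2\}/\varSparsity_{P_\round}$, and multiplying back by $\varSparsity_{P_\round}$ yields
\begin{equation*}
\UCB{\targetArm}{\round}{P_\round}{\delta} - \LCB{\targetArm}{\round}{P_\round}{\delta} \leq \max\{|\varImaginaryTargetArm - \targetMean_{P_\round}|, \epsilon/2\},
\end{equation*}
as claimed. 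I expect the main obstacle to lie precisely in this last step: carefully justifying the monotonicity of the supremum (the boundary $\beta$ in \eqref{eq:finite-lil} is not literally monotone at very small $t$) and confirming that the realized interval width is genuinely governed by $N_{Q_\round}(\round)$ rather than the round index $\round$. A secondary technicality is the property-testing regime, where $\varTransferFunction_{P_\round, Q_\round}$ takes the value $-\infty$ and the convention $\infty - \infty = 0$ must be tracked consistently through the decomposition.
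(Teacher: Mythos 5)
Your proof is correct and follows essentially the same route as the paper's: decompose the target interval width into per-source contributions, use the sampling rule to bound each nonzero summand by the $Q_\round$ contribution (collapsing the sum to $\varSparsity_{P_\round}$ terms), and then bound that single contribution via the good event and the definition of $\varStoppingTime_{P_\round, Q_\round}$. The monotonicity subtlety you flag at the end is real but is also glossed over in the paper's own proof (which simply asserts that the definition of $\varStoppingTime_{P_\round, Q_\round}$ gives the bound once $N_{Q_\round}(\round) \geq \varStoppingTime_{P_\round, Q_\round}$); if you want to avoid it entirely, note that the running-intersection construction of the source confidence sequences lets you compare the realized interval at round $\round$ to the one at the earlier round where $N_{Q_\round}$ first equals $\varStoppingTime_{P_\round, Q_\round}$.
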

    \begin{proof}
        Since we are on the good event, it must be true that $\sourceMean_{Q_\round} \geq \LCB{\sourceArm}{\round}{Q_\round}{\delta} \geq \sourceMean_{Q_\round} - 2\beta(N_{Q_\round}(\round), \delta)$. Therefore, the definition of $\varStoppingTime_{P_\round, Q_\round}$ implies that if $N_{Q_\round}(\round) \geq \varStoppingTime_{P_\round, Q_\round}$, then 
        \begin{align*}
            \algLength{Q_\round}{P_\round}{\round} &= \max_{m \in \CI{\sourceArm}{\round}{Q_\round}{\delta}} \varTransferFunction_{P_\round, Q_\round}(m) -  \min_{m \in \CI{\sourceArm}{\round}{Q_\round}{\delta}} \varTransferFunction_{P_\round, Q_\round}(m) \\&\leq \frac{\max\braket{|\varImaginaryTargetArm - \targetMean_{P_\round}|, \epsilon/2}}{\varSparsity_{P_\round}},
        \end{align*}
        where the inequality follows by the definition of $\varStoppingRule_{P_t, Q_t}$.
        Additionally, by the definition of the selection rule, we observe that for all $\sourceIndex \in [\numSourceArms]$,
        \begin{equation*}
            \ealgLength{Q_\round}{P_\round}{\round} \geq \ealgLength{\sourceIndex}{P_\round}{\round}.
        \end{equation*}
        Therefore, the following inequalities must hold
        \begin{align*}
            \UCB{\targetArm}{\round}{P_\round}{\delta} - \LCB{\targetArm}{\round}{P_\round}{\delta} &= \sum_{\sourceIndex \in [\numSourceArms]} \ealgLength{\sourceIndex}{P_\round}{\round} \\
            &\leq \sum_{\sourceIndex \in [\numSourceArms]} \ealgLength{Q_\round}{P_\round}{\round} \\
            &= \varSparsity_{P_\round} \ealgLength{Q_\round}{P_\round}{\round} \\
            &\leq \max\braket{|\varImaginaryTargetArm - \targetMean_{P_\round}|, \epsilon/2},
        \end{align*}
        which gives us the desired result.
    \end{proof}
    
            \begin{lemma}\label{lem:target_confidence_interval_overlap_1}
            Recall that $\varImaginaryTargetArm = \frac{\targetMean_1 + \targetMean_2}{2}$. On the good event $\varGoodEvent$ defined in equation~\eqref{def:good-event}, if the algorithm has not terminated, then there exists $P_t \in \{B_t, C_t\}$ such that 
                    \begin{equation}
                        \varMaxTerm{P_t} \leq |\CI{\targetArm}{t}{P_t}{\delta}|.
                    \end{equation}
        \end{lemma}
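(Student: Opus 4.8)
The plan is to establish this dichotomy by a case analysis on whether the LCB-leading arm $B_t$ coincides with the (by our ordering convention) optimal arm $1$. Throughout I would work on $\varGoodEvent$, on which Lemma~\ref{lem:good-event-prob} guarantees $\LCB{\targetArm}{t}{\targetIndex}{\delta} \le \targetMean_\targetIndex \le \UCB{\targetArm}{t}{\targetIndex}{\delta}$ for every target arm, and I would use the hypothesis that the algorithm has not stopped, which is exactly $\UCB{\targetArm}{t}{C_t}{\delta} > \LCB{\targetArm}{t}{B_t}{\delta} + \epsilon$. Recalling that $\varMaxTerm{P_t} = \max\{|\varImaginaryTargetArm - \targetMean_{P_t}|,\epsilon/2\}$, the goal is to produce one index $P_t \in \{B_t, C_t\}$ whose interval length clears \emph{both} thresholds simultaneously.

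A single observation drives the argument. Since $B_t = \argmax_\targetIndex \LCB{\targetArm}{t}{\targetIndex}{\delta}$ we have $\LCB{\targetArm}{t}{B_t}{\delta} \ge \LCB{\targetArm}{t}{C_t}{\delta}$, so $|\CI{\targetArm}{t}{C_t}{\delta}| = \UCB{\targetArm}{t}{C_t}{\delta} - \LCB{\targetArm}{t}{C_t}{\delta} \ge \UCB{\targetArm}{t}{C_t}{\delta} - \LCB{\targetArm}{t}{B_t}{\delta} > \epsilon \ge \epsilon/2$. Thus the $\epsilon/2$ branch of $\varMaxTerm{C_t}$ is \emph{always} met by $C_t$, and what remains is to control the $|\varImaginaryTargetArm - \targetMean_{P_t}|$ branch (here I assume $\numTargetArms \ge 2$ so that $C_t$ and $\targetMean_2$ are defined).

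\emph{Case $B_t \neq 1$.} Then arm $1$ is an admissible choice for $C_t$, so $\UCB{\targetArm}{t}{C_t}{\delta} \ge \UCB{\targetArm}{t}{1}{\delta} \ge \targetMean_1 \ge \varImaginaryTargetArm$. If $C_t \neq 1$, combining this with $\LCB{\targetArm}{t}{C_t}{\delta} \le \targetMean_{C_t} \le \varImaginaryTargetArm$ yields $|\CI{\targetArm}{t}{C_t}{\delta}| \ge \targetMean_1 - \targetMean_{C_t} \ge \varImaginaryTargetArm - \targetMean_{C_t} = |\varImaginaryTargetArm - \targetMean_{C_t}|$; if instead $C_t = 1$, I would use $\LCB{\targetArm}{t}{1}{\delta} \le \LCB{\targetArm}{t}{B_t}{\delta} \le \targetMean_{B_t} \le \targetMean_2$ to get $|\CI{\targetArm}{t}{1}{\delta}| \ge \targetMean_1 - \targetMean_2 \ge (\targetMean_1 - \targetMean_2)/2 = |\varImaginaryTargetArm - \targetMean_1|$. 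Either way $P_t = C_t$ works. \emph{Case $B_t = 1$} (so $C_t \neq 1$, whence $\targetMean_{C_t} \le \varImaginaryTargetArm$). If $|\CI{\targetArm}{t}{C_t}{\delta}| \ge |\varImaginaryTargetArm - \targetMean_{C_t}|$ I take $P_t = C_t$; otherwise $\UCB{\targetArm}{t}{C_t}{\delta} - \LCB{\targetArm}{t}{C_t}{\delta} < \varImaginaryTargetArm - \targetMean_{C_t}$ together with $\LCB{\targetArm}{t}{C_t}{\delta} \le \targetMean_{C_t}$ forces $\UCB{\targetArm}{t}{C_t}{\delta} < \varImaginaryTargetArm$. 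Feeding this into non-termination gives $\LCB{\targetArm}{t}{1}{\delta} < \UCB{\targetArm}{t}{C_t}{\delta} - \epsilon < \varImaginaryTargetArm - \epsilon$, and with $\UCB{\targetArm}{t}{1}{\delta} \ge \targetMean_1$ I obtain $|\CI{\targetArm}{t}{1}{\delta}| > \targetMean_1 - \varImaginaryTargetArm + \epsilon = (\targetMean_1 - \targetMean_2)/2 + \epsilon$, which exceeds both $|\varImaginaryTargetArm - \targetMean_1|$ and $\epsilon/2$; so $P_t = B_t = 1$ works.

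The hard part will be the final case, where $C_t$'s interval may be too short to use: there one must pivot to arm $1$ and convert the fact that $C_t$'s UCB sits below the threshold $\varImaginaryTargetArm$ into a genuine lower bound on $|\CI{\targetArm}{t}{1}{\delta}|$, and the only lever available is the non-termination inequality. A secondary point of care is that $\varMaxTerm{P_t}$ is a maximum, so the chosen $P_t$ must clear $|\varImaginaryTargetArm - \targetMean_{P_t}|$ and $\epsilon/2$ at once; it is precisely the uniform bound $|\CI{\targetArm}{t}{C_t}{\delta}| > \epsilon$ from the second paragraph that lets a single arm meet both thresholds rather than splitting them across $B_t$ and $C_t$.
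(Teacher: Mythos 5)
Your proof is correct, and it takes a genuinely different route from the paper's. The paper splits on which branch of the maximum $\varMaxTerm{P_t} = \max\{|\varImaginaryTargetArm - \targetMean_{P_t}|, \epsilon/2\}$ is active: for the first branch it argues by contradiction, running through the four possible placements of $\varImaginaryTargetArm$ relative to $\CI{\targetArm}{t}{B_t}{\delta}$ and $\CI{\targetArm}{t}{C_t}{\delta}$ to conclude that $\varImaginaryTargetArm$ must lie in one of them (so that interval also contains $\targetMean_{P_t}$ and hence has length at least $|\varImaginaryTargetArm - \targetMean_{P_t}|$); for the second branch it uses $|\CI{\targetArm}{t}{B_t}{\delta}| + |\CI{\targetArm}{t}{C_t}{\delta}| > \UCB{\targetArm}{t}{C_t}{\delta} - \LCB{\targetArm}{t}{B_t}{\delta} > \epsilon$ to extract one interval of length at least $\epsilon/2$. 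You instead case on whether $B_t$ is the optimal arm and, in every subcase, exhibit a single arm whose interval clears both thresholds at once, anchored by the observation that $\LCB{\targetArm}{t}{C_t}{\delta} \le \LCB{\targetArm}{t}{B_t}{\delta}$ forces $|\CI{\targetArm}{t}{C_t}{\delta}| > \epsilon$ whenever the algorithm has not stopped. Your version is tighter on precisely the point you flag at the end: because $\varMaxTerm{P_t}$ is a maximum, the witness $P_t$ must satisfy both inequalities simultaneously, whereas the paper's two cases are phrased in terms of the very $P_t$ being sought and could in principle yield different witnesses for the two branches. Your uniform bound on $|\CI{\targetArm}{t}{C_t}{\delta}|$ together with the explicit pivot to arm $1$ in the final subcase closes that gap cleanly, at the cost of a slightly longer case analysis.
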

        
        \begin{proof}
            We will split the proof into two cases which encompass all possible scenarios. The first case is when $|\varImaginaryTargetArm - \targetMean_{P_t}| \geq \frac{\epsilon}{2}$, and the other case is when $\frac{\epsilon}{2} \geq |\varImaginaryTargetArm - \targetMean_{P_t}|$. 
            
            \paragraph{Case 1} We start off by showing that $|\CI{\targetArm}{t}{P_t}{\delta}| \geq |\varImaginaryTargetArm - \targetMean_{P_t}|$. Here we assume that $|\varImaginaryTargetArm - \targetMean_{P_t}| \geq \frac{\epsilon}{2}$.  Suppose for the purpose of contradiction that $\varImaginaryTargetArm \not \in \CI{\targetArm}{t}{P_t}{\delta}$. If this is the case, then one of the following four statements must be true:
            \begin{enumerate}
                \item $\varImaginaryTargetArm < \LCB{\targetArm}{t}{B_t}{\delta}$ and $\varImaginaryTargetArm < \LCB{\targetArm}{t}{C_t}{\delta}$. However, on $\varGoodEvent$, the only arm which can have a lower confidence bound greater than $\varImaginaryTargetArm$ is arm $1$.
                
                \item $\varImaginaryTargetArm > \UCB{\targetArm}{t}{B_t}{\delta}$ and $\varImaginaryTargetArm > \UCB{\targetArm}{t}{C_t}{\delta}$. However, on $\varGoodEvent$, the upper confidence bound of arm $1$, and hence the upper confidence bound of $B_t$, must be greater than $\varImaginaryTargetArm$.
                
                \item $\varImaginaryTargetArm > \UCB{\targetArm}{t}{B_t}{\delta}$ and $\varImaginaryTargetArm < \LCB{\targetArm}{t}{C_t}{\delta}$. However, on $\varGoodEvent$, the upper confidence bound of arm $1$, and hence the upper confidence bound of $B_t$, must be greater than $\varImaginaryTargetArm$.
                
                \item $\varImaginaryTargetArm < \LCB{\targetArm}{t}{B_t}{\delta}$ and $\varImaginaryTargetArm > \UCB{\targetArm}{t}{C_t}{\delta}$. This would imply that he algorithm has terminated, which by assumption, is false.
            \end{enumerate}
            Therefore, by our initial assumption we observe that there exists $P_t \in \{B_t, C_t\}$ satisfying $\varMaxTerm{P_t} \leq |\CI{\targetArm}{t}{P_t}{\delta}|$.
            
            \paragraph{Case 2} Here we show that exists a $P_t \in \{B_t, C_t\}$ such that $|\CI{\targetArm}{t}{P_t}{\delta}| \geq \frac{\epsilon}{2}$. For this case, we assume  that $\frac{\epsilon}{2} \geq |\varImaginaryTargetArm - \targetMean_{P_t}|$.
            By the definition of the stopping rule, we know that 
                \begin{equation}\label{eq:stopping_condition_lemma}
                    \LCB{\targetArm}{t}{B_t}{\delta} < \UCB{\targetArm}{t}{C_t}{\delta} - \epsilon.
                \end{equation}
            We observe that $|\CI{\targetArm}{t}{B_t}{\delta}| + |\CI{\targetArm}{t}{C_t}{\delta}| > \UCB{\targetArm}{t}{C_t}{\delta} - \LCB{\targetArm}{t}{B_t}{\delta}$.
            Then rearranging equation~\eqref{eq:stopping_condition_lemma} yields 
                \begin{align*}
                    \epsilon    & < \UCB{\targetArm}{t}{C_t}{\delta} - \LCB{\targetArm}{t}{B_t}{\delta} \\
                                & < |\CI{\targetArm}{t}{B_t}{\delta}| + |\CI{\targetArm}{t}{C_t}{\delta}|.
                \end{align*}
            Therefore, by our initial assumption we observe that there exists $P_t \in \{B_t, C_t\}$ satisfying $\varMaxTerm{P_t} \leq |\CI{\targetArm}{t}{P_t}{\delta}|$.
            
            We have thus shown that, in both cases, there exists $P_t \in \{B_t, C_t\}$ satisfying $P_t \in \{B_t, C_t\}$ satisfying $\varMaxTerm{P_t} \leq |\CI{\targetArm}{t}{P_t}{\delta}|$, which proves the desired result.
        \end{proof}
        
    \begin{lemma}
        On the good event, $\goodEvent$, if the algorithm has not stopped, then there exists a pair $(P_\round, Q_\round) \in \braket{(B_\round, I_\round), (C_\round, J_\round)}$ such that $N_{Q_\round}(\round) < \varStoppingTime_{P_\round, Q_\round}$ 
    \end{lemma}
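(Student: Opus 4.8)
The plan is to argue by contradiction, combining the two preceding lemmas of this subsection. Suppose we are on the good event $\goodEvent$, that the algorithm has not stopped at round $\round$, and yet that the conclusion fails, i.e.\ that \emph{both} pairs satisfy the reverse inequality: $N_{I_\round}(\round) \geq \varStoppingTime_{B_\round, I_\round}$ and $N_{J_\round}(\round) \geq \varStoppingTime_{C_\round, J_\round}$. The goal is to derive a contradiction, which then forces at least one pair $(P_\round, Q_\round) \in \braket{(B_\round, I_\round), (C_\round, J_\round)}$ to obey $N_{Q_\round}(\round) < \varStoppingTime_{P_\round, Q_\round}$.

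First I would invoke the length-control lemma (the first lemma of this subsection) separately for each pair. Applying it to $(B_\round, I_\round)$ and to $(C_\round, J_\round)$ yields $|\CI{\targetArm}{\round}{P_\round}{\delta}| \leq \varMaxTerm{P_\round}$ for \emph{both} $P_\round \in \braket{B_\round, C_\round}$. The essential point is that these inequalities are in fact \emph{strict}: the defining condition for $\varStoppingTime_{P_\round, Q_\round}$ is a strict inequality on $\sup_{x} \ecomplexityLength{Q_\round}{P_\round}{\round}{x}$, and since $\beta(\cdot, \delta)$ is nonincreasing in the sample count, the per-source oscillation $\ecomplexityLength{Q_\round}{P_\round}{\cdot}{x}$ is nonincreasing as well; hence once $N_{Q_\round}(\round)$ reaches $\varStoppingTime_{P_\round, Q_\round}$ the strict inequality persists and transfers to the target confidence interval.

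Next I would apply Lemma~\ref{lem:target_confidence_interval_overlap_1}, whose hypotheses (being on $\goodEvent$ with the algorithm not stopped) are exactly those already assumed. It produces some $P_\round \in \braket{B_\round, C_\round}$ for which $\varMaxTerm{P_\round} \leq |\CI{\targetArm}{\round}{P_\round}{\delta}|$. Chaining this with the strict bound from the previous step for that same $P_\round$ gives $\varMaxTerm{P_\round} \leq |\CI{\targetArm}{\round}{P_\round}{\delta}| < \varMaxTerm{P_\round}$, a contradiction. Therefore the supposition fails, and at least one pair must satisfy $N_{Q_\round}(\round) < \varStoppingTime_{P_\round, Q_\round}$, as claimed.

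The step I expect to be the main obstacle is making the strictness in the first step fully rigorous. One must carefully relate the actual source confidence interval $\CI{\sourceArm}{\round}{Q_\round}{\delta}$ --- which on $\goodEvent$ lies inside a window of the form $[x, x + 2\beta(N_{Q_\round}(\round), \delta)]$ anchored at its lower confidence bound $x \in [\sourceMean_{Q_\round} - 2\beta(N_{Q_\round}(\round), \delta), \sourceMean_{Q_\round}]$ --- to the supremum over $x$ appearing in the definition of $\varStoppingTime_{P_\round, Q_\round}$, and then verify that the inequality inherited from that definition is genuinely strict rather than weak. Everything else reduces to a direct chaining of the two earlier lemmas.
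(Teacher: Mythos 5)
Your argument is correct, but it is organized differently from the paper's. The paper proves the lemma \emph{directly}: it starts from Lemma~\ref{lem:target_confidence_interval_overlap_1}, which supplies a $P_t \in \braket{B_t, C_t}$ with $\varMaxTerm{P_t} \leq |\CI{\targetArm}{t}{P_t}{\delta}| = \sum_{\sourceIndex} \ealgLength{\sourceIndex}{P_t}{t}$, applies the pigeonhole principle to extract one source arm contributing at least $\varMaxTerm{P_t}/\varSparsity_{P_t}$, and then uses the fact that $Q_t$ is chosen as the \emph{argmax} of $\ealgLength{\sourceIndex}{P_t}{t}$ to conclude that $\sup_x \ecomplexityLength{Q_t}{P_t}{N_{Q_t}(t)}{x} \geq \varMaxTerm{P_t}/\varSparsity_{P_t}$, i.e.\ the defining condition of $\varStoppingTime_{P_t,Q_t}$ fails at $N_{Q_t}(t)$, so $N_{Q_t}(t) < \varStoppingTime_{P_t,Q_t}$. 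You instead run the contrapositive through the subsection's first (length-control) lemma: assuming both pairs are well-sampled, that lemma bounds both target confidence intervals, and Lemma~\ref{lem:target_confidence_interval_overlap_1} then yields the contradiction. Your route is a legitimate and arguably cleaner modularization (the paper never actually invokes its length-control lemma in the chain leading to Theorem~\ref{thm:tlucb-sample-complexity}), but it genuinely requires the \emph{strict} form of that lemma, which is stated in the paper only with a weak inequality; as you note, strictness does follow from its proof, since $\ealgLength{Q_t}{P_t}{t} \leq \sup_x \ecomplexityLength{Q_t}{P_t}{N_{Q_t}(t)}{x} < \varMaxTerm{P_t}/\varSparsity_{P_t}$ once $N_{Q_t}(t) \geq \varStoppingTime_{P_t,Q_t}$. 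Both your argument and the paper's rely on the same implicit monotonicity assumption --- that the condition defining $\varStoppingTime_{P_t,Q_t}$, once satisfied, remains satisfied for larger sample counts (equivalently, that $\beta(\cdot,\delta)$ is nonincreasing) --- so you are not incurring any extra hypothesis there; the one point you should make fully explicit is the strengthening of the length-control lemma to a strict inequality.
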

    \begin{proof}
     By Lemma \ref{lem:target_confidence_interval_overlap_1} we know that
                \begin{align*}
                    \varMaxTerm{\targetIndex} &\leq |\CI{\targetArm}{t}{P_t}{\delta}| \\
                    &= \uosum{\sourceIndex = 1}{\numSourceArms} \ealgLength{\sourceIndex}{P_\round}{\round}.
                \end{align*}
                
    By applying the pigeonhole principle, we see that there must exist at least one $\sourceIndex' \in [\numSourceArms]$ such that 
            \begin{equation*}
                \ealgLength{\sourceIndex'}{P_\round}{\round} \geq \frac{\varMaxTerm{P_t}}{s_{P_t}}.
            \end{equation*}
    Then, by applying the definition of the selection rule, and the fact that on the good event $\sourceMean_{Q_\round} \geq \LCB{\sourceArm}{\round}{Q_\round}{\delta} \geq  \sourceMean_{Q_\round} - 2\beta(N_{Q_\round}(\round), \delta)$, we observe that
    \begin{align*}
        \sup_{x \in [\sourceMean_\sourceIndex - 2\beta(N_{Q_\round}(\round), \delta), \sourceMean_\sourceIndex]} \ecomplexityLength{Q_\round}{P_\round}{\round}{x}
                                                &\geq \ealgLength{Q_\round}{P_\round}{\round} \\ 
                                                &\geq \ealgLength{\sourceIndex'}{P_\round}{\round} \\
                                                &\geq \frac{\varMaxTerm{P_t}}{s_{P_t}}.
    \end{align*}
    This implies that
    \begin{align*}
        N_{Q_\round}(\round)    &\leq \min\braket{t \in \bbN: \sup_{x \in [\sourceMean_{Q_t} - 2\beta(\round, \delta), \sourceMean_{Q_t}]}  \ecomplexityLength{Q_t}{P_t}{\round}{x} < \frac{\max\braket{|\varImaginaryTargetArm - \targetMean_{P_t}|, \epsilon/2}}{\varSparsity_{P_t}}} \\
                                &= \varStoppingTime_{P_\round, Q_\round},
    \end{align*}
    as desired.
    \end{proof}
    
     We are now ready to prove Theorem~\ref{thm:tlucb-sample-complexity}.
        \begin{proof}[Proof of Theorem~\ref{thm:tlucb-sample-complexity}]
            We have
                \begin{align*}
                                    \varStoppingTime            
                                    &= \uosum{\round = 1}{\infty} \indicator{t \leq \tau} \\
                                    &\leq \uosum{\round = 1}{\infty} \indicator{\exists (P_t, Q_t) \in \braket{(B_t, I_t), (C_t, J_t)}: N_{Q_\round}(\round) \leq \varStoppingTime_{P_\round, Q_\round}} \\
                                    &\leq \uosum{\sourceIndex \in [\numSourceArms]}{}\uosum{\round = 1}{\infty} \indicator{\sourceIndex \in \braket{I_\round, J_\round}} \cdot \indicator{N_\sourceIndex(\round) \leq \max_{\targetIndex \in [\numTargetArms]} \varStoppingTime_{\targetIndex, \sourceIndex}} \\
                                    &=\uosum{\sourceIndex \in [\numSourceArms]}{}\uosum{\round = 1}{\infty} \indicator{\sourceIndex \in \braket{I_\round, J_\round}} \cdot \indicator{N_\sourceIndex(\round) \leq \varStoppingTime_{\sourceIndex}} \\
                                    &\leq \uosum{\sourceIndex\in[\numSourceArms]}{} \varStoppingTime_{\sourceIndex},
                \end{align*}
        which proves the desired result.
        \end{proof}

    \begin{proof}[Proof of Corollary~\ref{cor:property-testing}]
        Suppose $\sourceIndex \in \armSet$ since we otherwise don't need to sample source arm $\sourceIndex$ to determine if $\armSet$ is the optimal target arm. From equation~\eqref{eq:property-testing-length} we see that $L(\sourceIndex, \propertySet, \round, x) = \infty$ unless the confidence interval for $\sourceMean_\sourceIndex$ is a subset of $\propertySet_\sourceIndex$ or $\propertySet_\sourceIndex^c$.
        We consider two cases.
        \paragraph{Case 1.} Suppose that $\mu_\sourceIndex \in \propertySet_\sourceIndex$. Then, we require the confidence interval is a subset of $\propertySet_\sourceIndex$. For this to be true, it is easy to see that we require $2 \beta(t, \delta) \leq \inf_{x \in \propertySet^c_\sourceIndex} |x - \sourceMean_\sourceIndex| = \Delta_{\propertySet_\sourceIndex}(\sourceMean_\sourceIndex)$.
        
        \paragraph{Case 2.} Suppose that $\sourceMean_\sourceIndex \not \in \propertySet_\sourceIndex$. Then a similar argument shows that we require $2 \beta(t, \delta) \leq \inf_{x \in \propertySet_\sourceIndex} |x - \sourceMean_\sourceIndex| = \Delta_{\propertySet_\sourceIndex}(\sourceMean_\sourceIndex)$.
        
        In conclusion, we see that if $\sourceIndex \in \armSet$, then $\varStoppingTime_{\armSet, \sourceIndex} = \inf\{ \round \in \mathbb N: \beta(\round, \delta) \leq \frac{\Delta_{\propertySet_\sourceIndex}(\sourceMean_\sourceIndex)}{2}\}$.
        Applying Theorem 16 of \cite{kaufmann2017monte} gives the desired result.  
    \end{proof}
    
    \begin{proof}[Proof of Corollary~\ref{eq:cor-linear-sample-complexity}]
        We observe that since $\ecomplexityLength{\sourceIndex}{\targetIndex}{\round}{x} = 2|\varLinearTransformation_{\targetIndex, \sourceIndex}|\beta(\round, \delta)$ we have
        \begin{equation*}
            \varStoppingTime_{\targetIndex, \sourceIndex} = \min\left\{\round \in \bbN : \beta(\round, \delta) \leq \frac{\max\braket{|\varImaginaryTargetArm - \targetMean_\targetIndex|, \epsilon/2}}{\varSparsity_\targetIndex |\varLinearTransformation_{\targetIndex, \sourceIndex}}\right\}.
        \end{equation*}
        Applying Theorem 16 of \cite{kaufmann2017monte} and taking the max over target arms gives the desired result
        
    \end{proof}

        \subsection{Results for Instantiations}\label{app:instantiations}
        \paragraph{TopK}
        For TopK, we observe that $\ecomplexityLength{\sourceIndex}{\targetIndex}{\round}{x} = 2\beta(t, \delta)$ which is independent of $x$. Additionally, we note that for all $a \in [\numTargetArms]$, $\varSparsity_\targetIndex = K$. Therefore, for a fixed $\targetIndex$, we have $\varStoppingTime_{\targetIndex, \sourceIndex} = \min\{\round \in \mathbb N: \beta(\round, \delta) \leq \frac{|\bar{\targetMean}_{1, 2} - \targetMean_\targetIndex|}{K}\}$. Applying Theorem 16 of \cite{kaufmann2017monte} we have 
        \begin{equation*}
            \varStoppingTime_{\targetIndex, \sourceIndex} \leq O\left(\frac{K^2}{(\overline{\targetMean}_{1, 2} - \targetMean_\targetIndex)^2} \log \frac{1}{\delta}\right).
        \end{equation*}
        Next, we observe that 
        \begin{align*}
            \max_{a \in \targetIndex} \varStoppingTime_{\targetIndex, \sourceIndex} &\leq O\left(\frac{K^2}{(\bar{\mu} - \mu_i)^2}\right),
        \end{align*}
        which can be seen by choosing $\nu_a$ to be the target arm which contains 
        \begin{enumerate}
            \item $\mu_1, \ldots, \mu_{K - 1}, \mu_i$ if $i > K$;
            \item $\mu_1, \ldots, \mu_{K}$ if $i \leq K$.
        \end{enumerate}

        \paragraph{Thresholding Bandits} 
        Since this a special case of the property testing problem, we see that Corollary~\ref{cor:property-testing} implies that $\Delta_{\propertySet_\sourceIndex}(\sourceMean_\sourceIndex) = (\sourceMean_\sourceIndex - \threshold)$, which gives the desired result.
            
\end{document}